\documentclass{article}
\usepackage[nonatbib, final]{nips_2018}

\usepackage[utf8]{inputenc}
\usepackage[T1]{fontenc}
\usepackage{hyperref} 
\usepackage{url}
\usepackage{booktabs}
\usepackage{amsfonts}
\usepackage{nicefrac}
\usepackage{microtype}
\usepackage{amsmath, amsthm}
\usepackage{graphicx}
\usepackage{enumitem}
\usepackage{floatrow}

\newtheorem{theorem}{Theorem}
\newtheorem{corollary}{Corollary}
 
\newtheorem{lemma}{Lemma}

\newtheorem{definition}{Definition}

\newcommand{\N}{\mathcal{N}}
\newcommand{\R}{{\mathbb R}}

\newcommand{\E}[1]{{\mathbb E}\left [#1\right]}

\newcommand{\gives}{\ensuremath{\rightarrow}}

\newcommand{\abs}[1]{\ensuremath{\left| #1 \right|}}
\newcommand{\lr}[1]{\ensuremath{\left(#1 \right)}}
\newcommand{\norm}[1]{\left\lVert#1\right\rVert}
\newcommand{\inprod}[2]{\ensuremath{\left\langle#1,#2\right\rangle}}

\newcommand{\twiddle}[1]{\ensuremath{\widetilde{#1}}}

\newcommand{\set}[1]{\ensuremath{\{#1\}}}

\def\XXint#1#2#3{{\setbox0=\hbox{$#1{#2#3}{\int}$} \vcenter{\hbox{$#2#3$}}\kern-.5\wd0}}

\DeclareMathOperator{\Relu}{ReLU}

\DeclareMathOperator{\act}{preact}
\DeclareMathOperator{\Act}{act}
\DeclareMathOperator{\Var}{Var}



\title{How to Start Training:\\The Effect of Initialization and Architecture}
\author{Boris Hanin, David Rolnick}
\author{
  Boris Hanin\\
  Department of Mathematics\\
  Texas A\& M University\\
  College Station, TX, USA\\
  \texttt{bhanin@math.tamu.edu} \\
  \And
  David Rolnick\\
  Department of Mathematics\\
  Massachusetts Institute of Technology\\
  Cambridge, MA, USA\\
  \texttt{drolnick@mit.edu}\\
}

\begin{document}
\maketitle

\begin{abstract}
We identify and study two common failure modes for early training in deep $\Relu$ nets. For each, we give a rigorous proof of when it occurs and how to avoid it, for fully connected, convolutional, and residual architectures. We show that the first failure mode, exploding or vanishing mean activation length, can be avoided by initializing weights from a symmetric distribution with variance $2/\text{fan-in}$ and, for ResNets, by correctly scaling the residual modules. We prove that the second failure mode, exponentially large variance of activation length, never occurs in residual nets once the first failure mode is avoided. In contrast, for fully connected nets, we prove that this failure mode can happen and is avoided by keeping constant the sum of the reciprocals of layer widths. We demonstrate empirically the effectiveness of our theoretical results in predicting when networks are able to start training. In particular, we note that many popular initializations fail our criteria, whereas correct initialization and architecture allows much deeper networks to be trained.  
\end{abstract}
\section{Introduction}


Despite the growing number of practical uses for deep learning, training deep neural networks remains a challenge. Among the many possible obstacles to training, it is natural to distinguish two kinds: problems that prevent a given neural network from ever achieving better-than-chance performance and problems that have to do with later stages of training, such as escaping flat regions and saddle points \cite{jin2017escape, shalev2017failures}, reaching spurious local minima \cite{ge2017no,kawaguchi2016deep}, and overfitting \cite{arpit2017closer, zhang2016understanding}. This paper focuses specifically on two failure modes related to the first kind of difficulty:
\begin{itemize}
\item[\textbf{(FM1)}:] The mean length scale in the final layer increases/decreases exponentially with the depth.
\item[\textbf{(FM2)}:] The empirical variance of length scales across layers grows exponentially with the depth.
\end{itemize}

\noindent Our main contributions and conclusions are:

\begin{itemize}
\item \textbf{The mean and variance of activations in a neural network are both important in determining whether training begins.} If both failure modes FM1 and FM2 are avoided, then a deeper network need not take longer to start training than a shallower network.
\item \textbf{FM1 is dependent on weight initialization.} Initializing weights with the correct variance (in fully connected and convolutional networks) and correctly weighting residual modules (in residual networks) prevents the mean size of activations from becoming exponentially large or small as a function of the depth, allowing training to start for deeper architectures.
\item \textbf{For fully connected and convolutional networks, FM2 is dependent on architecture.} Wider layers prevent FM2, again allowing training to start for deeper architectures. In the case of constant-width networks, the width should grow approximately linearly with the depth to avoid FM2.
\item \textbf{For residual networks, FM2 is largely independent of the architecture.} Provided that residual modules are weighted to avoid FM1, FM2 can never occur. This qualitative difference between fully connected and residual networks can help to explain the empirical success of the latter, allowing deep and relatively narrow networks to be trained more readily.
\end{itemize}


FM1 for fully connected networks has been previously studied \cite{he2015delving,pennington2017resurrecting,schoenholz2016deep}. Training may fail to start, in this failure mode, since the difference between network outputs may exceed machine precision even for moderate $d$. For $\Relu$ activations, FM1 has been observed to be overcome by initializations of He et al.~\cite{he2015delving}. We prove this fact rigorously (see Theorems \ref{T:asymptotic-length} and \ref{T:res-act}). We find empirically that for poor initializations, training fails more frequently as networks become deeper (see Figures \ref{fig:mean} and \ref{fig:convnets}).

Aside from \cite{taki2017deep}, there appears to be less literature studying FM1 for residual networks (ResNets) \cite{he2016deep}. We prove that the key to avoiding FM1 in ResNets is to correctly rescale the contributions of individual residual modules (see Theorems \ref{T:asymptotic-length} and \ref{T:res-act}). Without this, we find empirically that training fails for deeper ResNets (see Figure \ref{fig:resnets}).

FM2 is more subtle and does not seem to have been widely studied (see \cite{klambauer2017self} for a notable exception). We find that FM2 indeed impedes early training (see Figure \ref{fig:variance}). Let us mention two possible explanations. First, if the variance between activations at different layers is very large, then some layers may have very large or small activations that exceed machine precision. Second, the backpropagated SGD update for a weight $W$ in a given layer includes a factor that corresponds to the size of activations at the previous layer. A very small update of $W$ essentially keeps it at its randomly initialized value. A very large update on the hand essentially re-randomizes $W$. Thus, we conjecture that FM2 causes the stochasticity of parameter updates to outweigh the effect of the training loss.

Our analysis of FM2 reveals an interesting difference between fully connected and residual networks. Namely, for fully connected and convolutional networks, FM2 is a function of architecture, rather than just of initialization, and can occur even if FM1 does not. For residual networks, we prove by contrast that FM2 never occurs once FM1 is avoided (see Corollary \ref{C:empvar-fc} and Theorem \ref{T:res-act}).




\section{Related Work}\label{S:lit-rev}
Closely related to this article is the work of Taki \cite{taki2017deep} on initializing ResNets as well as the work of Yang-Schoenholz \cite{yang2017mean, yang2018deep}. The article \cite{taki2017deep} gives heuristic computations related to the mean squared activation in a depth-$L$ ResNet and suggests taking the scales $\eta_\ell$ of the residual modules to all be equal to $1/L$ (see \S\ref{S:fm1-res}). The articles \cite{yang2017mean, yang2018deep}, in contrast, use mean field theory to derive the dependence of both  forward and backward dynamics in a randomly initialized ResNet on the residual module weights. 

Also related to this work is that of He et al.~\cite{he2015delving} already mentioned above, as well as \cite{kadmon2015transition, poole2016exponential, raghu2016expressive, schoenholz2016deep}. The authors in the latter group show that information can be propagated in infinitely wide $\Relu$ nets so long as weights are initialized independently according  to an appropriately normalized distribution (see condition (ii) in Definition \ref{D:rand-relu}). One notable difference between this collection of papers and the present work is that we are concerned with a rigorous computation of finite width effects. 

These finite size corrections were also studied by Schoenholz et al.~\cite{schoenholz2017correspondence}, which gives exact formulas for the distribution of pre-activations in the case when the weights and biases are Gaussian. For more on the Gaussian case, we also point the reader to Giryes et al.~\cite{giryes2016deep}. The idea that controlling means and variances of activations at various hidden layers in a deep network can help with the start of training was previously considered in Klaumbauer et al.~\cite{klambauer2017self}. This work introduced the scaled exponential linear unit (SELU) activation, which is shown to cause the mean values of neuron activations  to converge to $0$ and the average squared length to converge to $1.$ A different approach to this kind of self-normalizing behavior was suggested in Wu et al.~\cite{ba2016layer}. There, the authors suggest to add a linear hidden layer (that has no learnable parameters) but directly normalizes activations to have mean $0$ and variance $1.$ Activation lengths can also be controlled by constraining weight matrices to be orthogonal or unitary (see e.g.~\cite{arjovsky2016unitary, henaff2016recurrent, jing2017tunable, le2015simple, saxe2013exact}).

We would also like to point out a previous appearance in Hanin~\cite{hanin2018neural} of the sum of reciprocals of layer widths, which we here show determines the variance of the sizes of the activations (see Theorem \ref{T:asymptotic-length}) in randomly initialized fully connected $\Relu$ nets. The article \cite{hanin2018neural} studied the more delicate question of the variance for gradients computed by random $\Relu$ nets. Finally, we point the reader to the discussion around Figure 6 in \cite{arpit2017closer}, which also finds that time to convergence is better for wider networks.

\section{Results}
In this section, we will (1) provide an intuitive motivation and explanation of our mathematical results, (2) verify empirically that our predictions hold, and (3) show by experiment the implications for training neural networks.
\begin{figure}[htbp]
\begin{center}
(a)\includegraphics[scale=0.3]{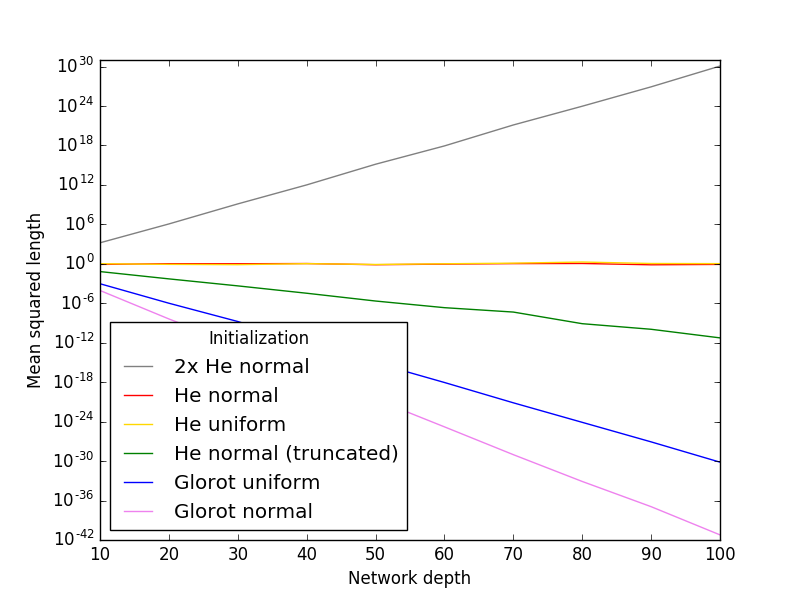}
(b)\includegraphics[scale=0.3]{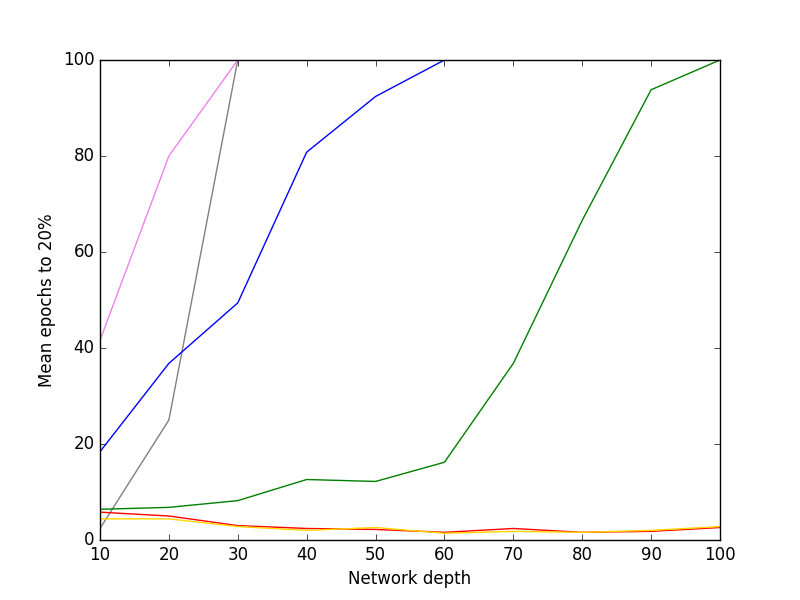}
\caption{Comparison of the behavior of differently initialized fully connected networks as depth increases. Width is equal to depth throughout. Note that in \texttt{He\;normal\;(truncated)}, the normal distribution is truncated at two standard deviations from the mean, as implemented e.g.~in Keras and PyTorch. For \texttt{2x\;He\;normal}, weights are drawn from a normal distribution with twice the variance of \texttt{He\;normal}. (a) Mean square length $M_d$ (log scale), demonstrating exponential decay or explosion unless variance is set at $2/\text{fan-in}$, as in \texttt{He\;normal} and \texttt{He\;uniform} initializations; (b) average number of epochs required to obtain 20\% test accuracy when training on vectorized MNIST, showing that exponential decay or explosion of $M_d$ is associated with reduced ability to begin training.}
\label{fig:mean}
\end{center}
\end{figure}

\subsection{Avoiding FM1 for Fully Connected Networks: Variance of Weights}\label{S:fm1-fc}
Consider a depth-$d$, fully connected $\Relu$ net $\N$ with hidden layer widths $n_j,\, j=0,\ldots, d,$ and random weights and biases (see Definition \ref{D:rand-relu} for the details of the initialization). As $\N$ propagates an input vector $\Act^{(0)}\in {\bf R}^{n_0}$ from one layer to the next, the lengths of the resulting vectors of activations $\Act^{(j)}\in {\bf R}^{n_j}$ change in some manner, eventually producing an output vector whose length is potentially very different from that of the input. These changes in length are summarized by 
\[M_j:=\frac{1}{n_j}\norm{\Act^{(j)}}^2,\]
where here and throughout the squared norm of a vector is the sum of the squares of its entries. We prove in Theorem \ref{T:asymptotic-length} that the mean of the normalized output length $M_d$, which controls whether failure mode FM1 occurs, is determined by the variance of the distribution used to initialize weights. We emphasize that all our results hold for \emph{any fixed input}, which need not be random; we average only over the weights and the biases. Thus, FM1 cannot be directly solved by batch normalization \cite{ioffe2015batch}, which renormalizes by averaging over inputs to $\N$, rather than averaging over initializations for $\N$. 

\begin{theorem}[FM1 for fully connected networks (informal)]\label{T:mean-inf}
The mean $\E{M_d}$ of the normalized output length is equal to the input length if network weights are drawn independently from a symmetric distribution with variance $2 / \text{fan-in}$. For higher variance, the mean $\E{M_d}$ grows exponentially in the depth $d$, while for lower variance, it decays exponentially.
\end{theorem}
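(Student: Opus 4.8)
The plan is to derive a one-step recursion for $\E{M_j}$ in terms of $\E{M_{j-1}}$ and then iterate it. Fix the input $\Act^{(0)}$ (deterministic, as in the paper's convention) and write the layer map coordinate-wise: $\act^{(j)}_i = \sum_{k=1}^{n_{j-1}} W^{(j)}_{ik}\Act^{(j-1)}_k + b^{(j)}_i$ and $\Act^{(j)}_i = \Relu(\act^{(j)}_i)$. Because the weights feeding a given neuron of layer $j$ are i.i.d.\ across neurons, the coordinates $\Act^{(j)}_i$ are identically distributed, so $\E{M_j} = \E{(\Act^{(j)}_1)^2}$.

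Next I would condition on the $\sigma$-algebra $\mathcal F_{j-1}$ generated by layers $0,\dots,j-1$. Given $\Act^{(j-1)}$, the pre-activation $\act^{(j)}_1$ is a sum of independent mean-zero terms $W^{(j)}_{1k}\Act^{(j-1)}_k$ (plus the bias, which I take to be zero or symmetric about $0$, as in Definition \ref{D:rand-relu}), so it is conditionally symmetric about $0$ with conditional variance $\sigma^2 \sum_k (\Act^{(j-1)}_k)^2 = \sigma^2 n_{j-1} M_{j-1}$, where $\sigma^2$ is the weight variance. The key structural fact is the elementary identity $\E{\Relu(X)^2} = \E{X^2 \mathbf{1}_{X>0}} = \tfrac{1}{2}\E{X^2}$, valid for any random variable $X$ symmetric about $0$. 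Applying it conditionally,
\[
\E{(\Act^{(j)}_1)^2 \mid \mathcal F_{j-1}} = \tfrac{1}{2}\,\sigma^2 n_{j-1} M_{j-1},
\]
and taking expectations, with $c := \sigma^2 \cdot \text{fan-in} = \sigma^2 n_{j-1}$, gives $\E{M_j} = \tfrac{c}{2}\,\E{M_{j-1}}$. Iterating from $M_0 = \tfrac{1}{n_0}\norm{\Act^{(0)}}^2$ yields $\E{M_d} = (c/2)^d M_0$: taking variance $2/\text{fan-in}$ makes $c=2$, so $\E{M_d} = M_0$ exactly, while $c>2$ forces exponential growth in $d$ and $c<1$ exponential decay (more precisely any $c<2$), which is the claim.

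I expect the main obstacle to be the handling of biases and the precise assumptions on the weight law rather than anything deep. The ReLU-halving identity requires the pre-activation to be \emph{exactly} symmetric about $0$, which forces any bias contribution to be symmetric about $0$ (or absent); if the formal model permits biases with nonzero mean the recursion acquires an inhomogeneous term, and one must verify this does not spoil the exponential dichotomy. One should also record a finite-second-moment hypothesis on the weights so that the conditioning and interchange of expectation are legitimate. The remaining ingredients --- identical distribution of the coordinates, the conditional variance computation, and the linear iteration --- are routine.
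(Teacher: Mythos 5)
Your argument is correct and follows essentially the same route as the paper's proof: condition on the previous layer, use the sign-symmetry of the pre-activation to get the ReLU-halving identity $\E{\Relu(X)^2}=\tfrac12\E{X^2}$, compute the conditional second moment, and iterate the resulting linear recursion (cf.\ \eqref{E:eval-M} and Corollary \ref{C:bad-init}). One small correction to your closing remark: even symmetric, mean-zero biases contribute an additive inhomogeneous term $\tfrac12\nu_2^{(j)}$ to the recursion (it is the bias \emph{variance}, not its mean, that enters), though this does not affect the exponential dichotomy in $\kappa=\sigma^2\cdot\text{fan-in}/2$.
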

In Figure \ref{fig:mean}, we compare the effects of different initializations in networks with varying depth, where the width is equal to the depth (this is done to prevent FM2, see \S\ref{S:fm2-fc}). Figure \ref{fig:mean}(a) shows that, as predicted, initializations for which the variance of weights is smaller than the critical value of $2 / \text{fan-in}$ lead to a dramatic decrease in output length, while variance larger than this value causes the output length to explode.  Figure \ref{fig:mean}(b) compares the ability of differently initialized networks to start training; it shows the average number of epochs required to achieve 20\% test accuracy on MNIST \cite{lecun1998mnist}. It is clear that those initializations which preserve output length are also those which allow for fast initial training - in fact, we see that it is \emph{faster} to train a suitably initialized depth-100 network than it is to train a depth-10 network. Datapoints in (a) represent the statistics over random unit inputs for 1,000 independently initialized networks, while (b) shows the number of epochs required to achieve 20\% accuracy on vectorized MNIST, averaged over 5 training runs with independent initializations, where networks were trained using stochastic gradient descent with a fixed learning rate of 0.01 and batch size of 1024, for up to 100 epochs. Note that changing the learning rate depending on depth could be used to compensate for FM1; choosing the right initialization is equivalent and much simpler.

While $\E{M_d}$ and its connection to the variance of weights at initialization have been previously noted (we refer the reader especially to \cite{he2015delving} and to \S\ref{S:lit-rev} for other references), the implications for choosing a good initialization appear not to have been fully recognized. Many established initializations for ReLU networks draw weights i.i.d.~from a distribution for which the variance is not normalized to preserve output lengths. As we shall see, such initializations hamper training of very deep networks. For instance, as implemented in the Keras deep learning Python library \cite{chollet2018keras}, the only default initialization to have the critical variance $2 / \text{fan-in}$ is \texttt{He\;uniform}. By contrast, \texttt{LeCun\;uniform} and \texttt{LeCun\;normal} have variance $1 / \text{fan-in}$, \texttt{Glorot\;uniform} (also known as \texttt{Xavier\;uniform}) and \texttt{Glorot\;normal} (\texttt{Xavier\;normal}) have variance $2 / (\text{fan-in}+\text{fan-out})$. Finally, the initialization \texttt{He\;normal} comes close to having the correct variance, but, at the time of writing, the Keras implementation truncates the normal distribution at two standard deviations from the mean (the implementation in PyTorch \cite{paszke2017automatic} does likewise). This leads to a decrease in the variance of the resulting weight distribution, causing a catastrophic decay in the lengths of output activations (see Figure \ref{fig:mean}). We note this to emphasize both the sensitivity of initialization and the popularity of initializers that can lead to FM1.

It is worth noting that the $2$ in our optimal variance $2/\text{fan-in}$ arises from the ReLU, which zeros out symmetrically distributed input with probability $1/2$, thereby effectively halving the variance at each layer. (For linear activations, the $2$ would disappear.) The initializations described above may preserve output lengths for activation functions \emph{other than ReLU}. However, ReLU is one of the most common activation functions for feed-forward networks and various initializations are commonly used blindly with ReLUs without recognizing the effect upon ease of training. An interesting systematic approach to predicting the correct multiplicative constant in the variance of weights as a function of the non-linearity is proposed in \cite{poole2016exponential, schoenholz2016deep} (e.g., the definition of $\chi_1$ around (7) in Poole et al.~\cite{poole2016exponential}). For non-linearities other than $\Relu$, however, this constant seems difficult to compute directly.
\begin{figure}[htbp]
\begin{center}
(a)\includegraphics[scale=0.3]{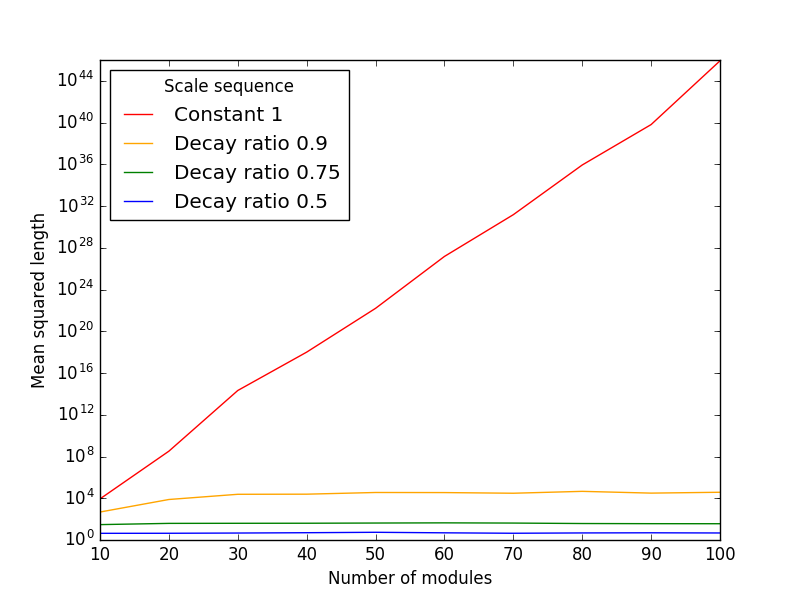}
(b)\includegraphics[scale=0.3]{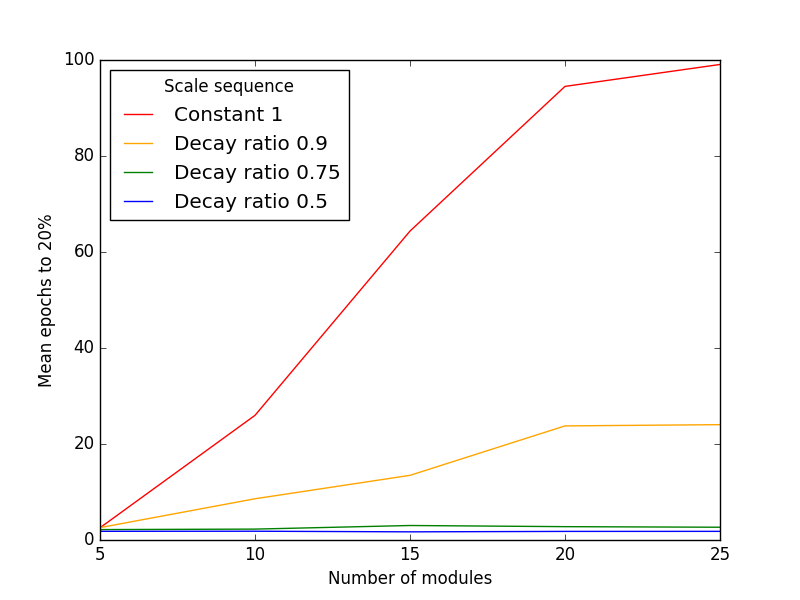}
\caption{Comparison of the behavior of differently scaled ResNets as the number of modules increases. Each residual module here is a single layer of width 5. (a) Mean length scale $M^{res}_L$, which grows exponentially in the sum of scales $\eta_\ell$; (b) average number of epochs to 20\% test accuracy when training on MNIST, showing that $M^{res}_L$ is a good predictor of initial training performance.}
\label{fig:resnets}
\end{center}
\end{figure}
\subsection{Avoiding FM1 for Residual Networks: Weights of Residual Modules}\label{S:fm1-res} To state our results about FM1 for ResNets, we must set some notation (based on the framework presented e.g.~in Veit et al.~\cite{veit2016residual}). For a sequence $\eta_\ell,\,\, \ell=1,2\ldots$ of positive real numbers and a sequence of fully connected $\Relu$ nets $\N_1,\N_2,\ldots$, we define a \emph{residual network} $\N^{res}_L$ with \textit{residual modules} $\N_1,\ldots,\N_L$ and \emph{scales} $\eta_1,\ldots, \eta_L$ by the recursion
\[\N^{res}_0(x)=x,\qquad \N^{res}_\ell(x)= \N^{res}_{\ell-1}\lr{x} + \eta_\ell \N_\ell\lr{\N^{res}_{\ell-1}(x)},\qquad \ell =1,\ldots, L.\]
Explicitly,
\begin{align}
 \N^{res}_L(x)~~=~~x~&+~\eta_1\,\N_1(x) ~+~ \eta_2\,\N_2(x\,+\, \eta_1\,\N_1(x))\label{E:res-def}
 \\ &+~\eta_3\,\N_3\lr{x\,+\,\eta_1\,\N_1(x) \,+\, \eta_2\,\N_2\lr{x\,+\, \eta_1\,\N_1(x)}}+\cdots.\nonumber
\end{align}
Intuitively, the scale $\eta_\ell$ controls the size of the correction to $\N^{res}_{\ell-1}$ computed by the residual module $\N_\ell.$ Since we implicitly assume that the depths and widths of the residual modules $\N_\ell$ are uniformly bounded (e.g., the modules may have a common architecture), failure mode FM1 comes down to determining for which sequences $\{\eta_\ell\}$ of scales there exist $c,C>0$ so that
\begin{equation}\label{E:approx-isom}
c\leq \sup_{L\geq 1}\E{M^{res}_L}\leq C,
\end{equation}
where we write $M^{res}_L=\norm{\N^{res}_L(x)}^2$ and $x$ is a unit norm input to $\N_L^{res}.$ The expectation in \eqref{E:approx-isom} is over the weights and biases in the fully connected residual modules $\N_\ell$, which we initialize as in Definition \ref{D:rand-relu}, except that we set biases to zero for simplicity (this does not affect the results below). A part of our main theoretical result, Theorem \ref{T:res-act}, on residual networks can be summarized as follows.
\begin{theorem}[FM1 for ResNets (informal)]\label{T:mean-inf-res}
Consider a randomly initialized ResNet with $L$ residual modules, scales $\eta_1,\ldots, \eta_L$, and weights drawn independently from a symmetric distribution with variance $2 / \text{fan-in}$. The mean $\E{M_L^{res}}$ of the normalized output length grows exponentially with the sum of the scales $\sum_{\ell=1}^L \eta_\ell.$ 
\end{theorem}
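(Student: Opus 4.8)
The plan is to derive a one-step recursion for the conditional mean of $M^{res}_\ell$ and then iterate. Let $\mathcal F_{\ell-1}$ be the $\sigma$-algebra generated by the weights of the modules $\N_1,\ldots,\N_{\ell-1}$, and set $v:=\N^{res}_{\ell-1}(x)$, which is $\mathcal F_{\ell-1}$-measurable. By the defining recursion,
\[
M^{res}_\ell=\norm{v+\eta_\ell\N_\ell(v)}^2=\norm v^2+2\eta_\ell\inprod{v}{\N_\ell(v)}+\eta_\ell^2\norm{\N_\ell(v)}^2,
\]
so, taking expectations over the fresh i.i.d.\ weights of $\N_\ell$ only, everything reduces to computing $\E{\norm{\N_\ell(v)}^2\mid v}$ and the cross term $\E{\inprod{v}{\N_\ell(v)}\mid v}$ for the fixed vector $v$.

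First I would handle $\E{\norm{\N_\ell(v)}^2\mid v}$. This is precisely the quantity controlled by Theorem~\ref{T:asymptotic-length}: since $\N_\ell$ is a fully connected $\Relu$ net whose weights are symmetric with variance $2/\text{fan-in}$, the normalized mean squared length is preserved layer by layer, and because a residual module necessarily has equal input and output width, this yields $\E{\norm{\N_\ell(v)}^2\mid v}=c_0\norm v^2$ for an explicit constant $c_0$ (equal to $1$ or $2$ depending on whether the module is taken to output its last activation or its last pre-activation, and otherwise depending only on the uniformly bounded module architecture). For the cross term, the key point is the symmetry of the weight distribution: the last weight matrix of $\N_\ell$ has mean zero and is independent of all earlier activations in the module, so $\E{\inprod{v}{\N_\ell(v)}\mid v}=a_\ell\norm v^2$ with $a_\ell$ bounded uniformly in $v$ and in the internal widths of $\N_\ell$ (and identically zero in the pre-activation convention). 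Combining, $\E{M^{res}_\ell\mid\mathcal F_{\ell-1}}=(1+2\eta_\ell a_\ell+\eta_\ell^2 c_0)M^{res}_{\ell-1}=:(1+\beta_\ell)M^{res}_{\ell-1}$, where $\beta_\ell\ge 0$ is an explicit quantity comparable to $\eta_\ell$ (the exact form being recorded in Theorem~\ref{T:res-act}).

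The remainder is immediate. Applying the tower property and iterating over $\ell=1,\ldots,L$, and using $\norm x=1$,
\[
\E{M^{res}_L}=\prod_{\ell=1}^L(1+\beta_\ell),
\]
so $\log\E{M^{res}_L}=\sum_{\ell=1}^L\log(1+\beta_\ell)$, which, since $\log(1+\beta_\ell)\asymp\beta_\ell\asymp\eta_\ell$, grows like $\sum_{\ell=1}^L\eta_\ell$. This two-sided product formula also pins down exactly when the criterion $c\le\sup_L\E{M^{res}_L}\le C$ of \eqref{E:approx-isom} holds, namely iff $\sum_\ell\eta_\ell<\infty$; in particular Taki's choice $\eta_\ell=1/L$ keeps $\sum_\ell\eta_\ell$ bounded and hence avoids FM1, while unscaled modules ($\eta_\ell\equiv1$) give $\E{M^{res}_L}=\prod_\ell(1+\beta)$ exponential in $L$.

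The main obstacle is the exact evaluation of the one-step constants, i.e.\ extracting $c_0$ (hence $\beta_\ell$) from Theorem~\ref{T:asymptotic-length} for a module of arbitrary bounded depth --- this needs care with how a $\Relu$ net's final layer is defined and with the width ratios along the module --- together with the cross-term analysis, where one must show the mean-zero symmetric initialization forces $\E{\inprod v{\N_\ell(v)}\mid v}$ to scale with $\norm v^2$ with a constant bounded uniformly in $v$. Everything downstream --- the tower-property iteration and the passage from the product to the exponential and to the sharp summability criterion --- is routine once that recursion is established.
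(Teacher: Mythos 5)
Your proposal is correct and follows essentially the same route as the paper: the paper's proof of Theorem \ref{T:res-act} likewise expands $\norm{v+\eta_\ell\N_\ell(v)}^2$ into the three terms, uses the martingale identity $\E{\norm{\N_\ell(v)}^2\mid v}=\norm v^2$ from \eqref{E:eval-M} for the $\eta_\ell^2$ term, bounds the cross term by $O(\norm v^2)$, and iterates the resulting $\lr{1+O(\eta_\ell)}$ recursion into $\exp\lr{O\lr{\sum_\ell\eta_\ell}}$. The one step you flag as the main obstacle is handled in the paper exactly as you anticipate, via the symmetrization trick, Cauchy--Schwarz on $\E{|W_\beta^{(d)}x^{(d-1)}|}$, and Jensen's inequality relating $\norm{\widehat x}_1$ to $\sqrt n$, which together give $\E{\inprod{\widehat x}{\N(\widehat x)}}=O(1)$.
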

We empirically verify the predictive power of the quantity $\eta_\ell$ in the performance of ResNets. In Figure \ref{fig:resnets}(a), we initialize ResNets with constant $\eta_\ell=1$ as well as geometrically decaying $\eta_\ell=b^\ell$ for $b=0.9, 0.75, 0.5$. All modules are single hidden layers with width 5. We observe that, as predicted by Theorem \ref{T:mean-inf}, $\eta_\ell=1$ leads to exponentially growing length scale $M^{res}_L$, while $\eta_\ell=b^\ell$ leads the mean of $M^{res}_L$ to grow until it reaches a plateau (the value of which depends on $b$), since $\sum \eta_\ell$ is finite. In Figure \ref{fig:resnets}(b), we show that the mean of $M^{res}_L$ well predicts the ease with which ResNets of different depths are trained. Note the large gap between $b=0.9$ and $0.75$, which is explained by noting that the approximation of $\eta^2 \ll \eta$ which we use in the proof holds for $\eta \ll 1$, leading to a larger constant multiple of $\sum \eta_\ell$ in the exponent for $b$ closer to 1. Each datapoint is averaged over 100 training runs with independent initializations, with training parameters as in Figure \ref{fig:mean}.

\subsection{FM2 for Fully Connected Networks: The Effect of Architecture}\label{S:fm2-fc}
In the notation of \S \ref{S:fm1-fc}, failure mode FM2 is characterized by a large expected value for 
\[\widehat{\Var}[M]:=\frac{1}{d}\sum_{j=1}^d M_j^2 - \lr{\frac{1}{d}\sum_{j=1}^d M_j}^2,\]
the empirical variance of the normalized squared lengths of activations among all the hidden layers in $\N.$ Our main theoretical result about FM2 for fully connected networks is the following. 

\begin{theorem}[FM2 for fully connected networks (informal)]\label{T:var-inf-res}
The mean $\mathbb E [\widehat{\Var}[M]]$ of the empirical variance for the lengths of activations in a fully connected $\Relu$ net is exponential in $\sum_{j=1}^{d-1} 1/n_j,$ the sum of the reciprocals of the widths of the hidden layers. 
\end{theorem}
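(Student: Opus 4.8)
The plan is to use the fact that, when weights are drawn from a symmetric law with the critical variance $2/\text{fan-in}$, the normalized squared lengths $(M_j)$ form a martingale (this is exactly the content of Theorem~\ref{T:mean-inf}), and then to reduce $\mathbb E[\widehat{\Var}[M]]$ to the quadratic variation of this martingale. First I would fix the input $\Act^{(0)}$, let $\mathcal F_j$ denote the $\sigma$-algebra generated by the weights and biases of the first $j$ layers, and observe that conditioned on $\mathcal F_{j-1}$ each preactivation of layer $j$ is an inner product $\langle W^{(j)}_i,\Act^{(j-1)}\rangle$ of the fixed vector $\Act^{(j-1)}$ with i.i.d.\ symmetric weights of variance $2/n_{j-1}$; it is therefore symmetric with second moment $2M_{j-1}$, so that $\mathbb E[\Relu(\langle W^{(j)}_i,\Act^{(j-1)}\rangle)^2\mid\mathcal F_{j-1}] = M_{j-1}$, and hence $\mathbb E[M_j\mid\mathcal F_{j-1}] = M_{j-1}$.

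Next I would set $v_j := \mathbb E[M_j^2]$ and derive a recursion for the increments of its bracket process. Since $(M_j)$ is a martingale, $v_j - v_{j-1} = \mathbb E[(M_j - M_{j-1})^2] = \mathbb E[\Var(M_j\mid\mathcal F_{j-1})]$. Because the $n_j$ preactivations of layer $j$ are conditionally i.i.d., $\mathbb E[M_j^2\mid\mathcal F_{j-1}] = M_{j-1}^2 + \tfrac1{n_j}\big(\mathbb E[\Relu(\cdot)^4\mid\mathcal F_{j-1}] - M_{j-1}^2\big)$, and by symmetry $\mathbb E[\Relu(\cdot)^4\mid\mathcal F_{j-1}] = \tfrac12\mathbb E[\langle W^{(j)}_i,\Act^{(j-1)}\rangle^4\mid\mathcal F_{j-1}]$; expanding this fourth moment gives the main term $6M_{j-1}^2$ plus a correction proportional to $\sum_k(\Act^{(j-1)}_k)^4$ and to the kurtosis of the weight law, which is of lower order in the widths. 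Thus $\Var(M_j\mid\mathcal F_{j-1}) = \big(5/n_j + O(1/(n_jn_{j-1}))\big)M_{j-1}^2$ and $v_j = \big(1 + 5/n_j + O(1/n_j^2 + 1/(n_jn_{j-1}))\big)v_{j-1}$ up to still lower-order terms, so iterating and using $1+5x \asymp e^{\Theta(x)}$ for $x \in (0,1]$ yields $v_j \asymp M_0^2\,\exp\!\big(\Theta(\sum_{i\le j}1/n_i)\big)$.

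To conclude I would assemble these estimates. The martingale property gives $\mathbb E[M_iM_j] = v_{\min(i,j)}$, so expanding $\widehat{\Var}[M] = \tfrac1d\sum_j M_j^2 - (\tfrac1d\sum_j M_j)^2$, taking expectations, and summing by parts yields the manifestly nonnegative identity
\[\mathbb E[\widehat{\Var}[M]] \;=\; \frac1{d^2}\sum_{m=1}^{d-1} m(d-m)\,(v_{m+1}-v_m).\]
Since $m(d-m)$ is concave and equals $d-1$ at both $m=1$ and $m=d-1$, it lies between $d-1$ and $d^2/4$ over the entire range, so the sum telescopes into the two-sided bound $\tfrac{d-1}{d^2}(v_d-v_1) \le \mathbb E[\widehat{\Var}[M]] \le \tfrac14(v_d-v_1)$ with $v_1 \asymp M_0^2$. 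Together with $v_d \asymp M_0^2\exp(\Theta(\sum_{i\le d}1/n_i))$ from the previous step --- and recalling that the output width $n_d$ is fixed rather than growing with $d$ --- this shows $\mathbb E[\widehat{\Var}[M]]$ is, up to a factor polynomial in $d$, exponential in $\sum_{j=1}^{d-1}1/n_j$, as claimed.

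The step I expect to be the main obstacle is controlling the lower-order terms in the $v_j$-recursion uniformly over the admissible architectures and weight laws: the diagonal contribution genuinely involves $\sum_k(\Act^{(j-1)}_k)^4$ rather than a function of $M_{j-1}$ alone, so the recursion for $v_j$ does not literally close. One resolves this by carrying along the auxiliary sequence $\mathbb E[\sum_k(\Act^{(j)}_k)^4]$, which satisfies a coupled recursion with the same exponential rate, or by imposing mild hypotheses --- bounded weight kurtosis and layer widths bounded below --- under which the correction is provably negligible next to the $5/n_j$ main term. Extracting the sharp constants from the summation-by-parts identity and tracking the role of $n_d$ is then routine.
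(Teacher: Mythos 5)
Your proposal is correct and follows essentially the same route as the paper: the martingale property of $M_j$ (giving $\E{M_iM_j}=\E{M_{\min(i,j)}^2}$) together with the conditional-variance recursion $\Var(M_j\mid\mathcal F_{j-1})\asymp M_{j-1}^2/n_j$, obtained from the fourth moment of the preactivations via the same symmetrization trick, which iterates to $\E{M_d^2}=\exp\lr{\Theta\lr{\sum_j 1/n_j}}$ exactly as in Lemma \ref{L:est} and \eqref{E:var-est}. The only points of divergence are cosmetic: your summation-by-parts identity $\E{\widehat{\Var}[M]}=\frac{1}{d^2}\sum_m m(d-m)(v_{m+1}-v_m)$ is a clean (and correct) repackaging of the paper's weighted sum $\sum_j \frac{j-1}{d^2}\E{M_j^2}$, and the $\norm{\Act^{(j-1)}}_4^4$ term you flag as the main obstacle is dispatched in the paper simply by the crude bound $\norm{\cdot}_4^4\leq\norm{\cdot}_2^4$ together with the lower bound $-2$ on excess kurtosis, at the cost of different constants $c<C$ in the exponents of the lower and upper bounds, which the statement permits.
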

For a formal statement see Theorem \ref{T:asymptotic-length}. It is well known that deep but narrow networks are hard to train, and this result provides theoretical justification; since for such nets $\sum 1/n_j$ is large. More than that, this sum of reciprocals gives a definite way to quantify the effect of ``deep but narrow'' architectures on the volatility of the scale of activations at various layers within the network. We note that this result also implies that for a given depth and fixed budget of neurons or parameters, constant width is optimal, since by the Power Mean Inequality, $\sum_j 1/n_j$ is minimized for all $n_j$ equal if $\sum n_j$ (number of neurons) or $\sum n_j^2$ (approximate number of parameters) is held fixed.
\begin{figure}[htbp]
\begin{center}
(a)\includegraphics[scale=0.3]{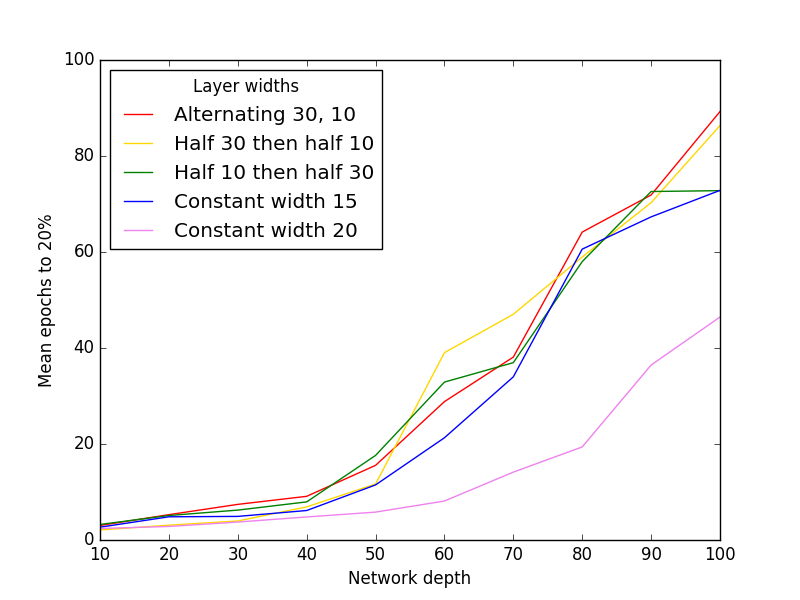}
(b)\includegraphics[scale=0.3]{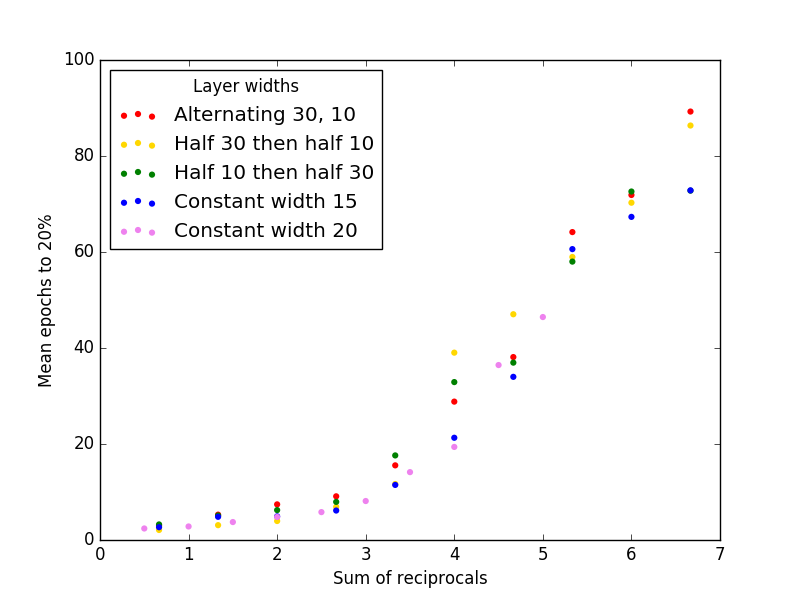}
\caption{Comparison of ease with which different fully connected architectures may be trained. (a) Mean epochs required to obtain 20\% test accuracy when training on MNIST, as a function of network depth; (b) same $y$-axis, with $x$-axis showing the sum of reciprocals of layer widths. Training efficiency is shown to be predicted closely by this sum of reciprocals, independent of other details of network architecture. Note that all networks are initialized with \texttt{He\;normal} weights to avoid FM1.}
\label{fig:variance}
\end{center}
\end{figure}

We experimentally verify that the sum of reciprocals of layer widths (FM2) is an astonishingly good predictor of the speed of early training. Figure \ref{fig:variance}(a) compares training performance on MNIST for fully connected networks of five types:
\begin{enumerate}
\item[i.] Alternating layers of width 30 and 10,
\item[ii.] The first half of the layers of width 30, then the second half of width 10,
\item[iii.] The first half of the layers of width 10, then the second half of width 30,
\item[iv.] Constant width 15,
\item[v.] Constant width 20.
\end{enumerate}

Note that types (i)-(iii) have the same layer widths, but differently permuted. As predicted by our theory, the order of layer widths does not affect FM2. We emphasize that $$\frac{1}{30} + \frac{1}{10} = \frac{1}{15} + \frac{1}{15},$$ type (iv) networks have, for each depth, the same sum of reciprocals of layer widths as types (i)-(iii). As predicted, early training dynamics for networks of type (i)-(iv) were similar for each fixed depth. By contrast, networks of type (v), which had a lower sum of reciprocals of layer widths, trained faster for every depth than the corresponding networks of types (i)-(iv).

In all cases, training becomes harder with greater depth, since $\sum 1/n_j$ increases with depth for constant-width networks. In Figure \ref{fig:variance}(b), we plot the same data with $\sum 1/n_j$ on the $x$-axis, showing this quantity's power in predicting the effectiveness of early training, irrespective of the particular details of the network architecture in question.

Each datapoint is averaged over 100 independently initialized training runs, with training parameters as in Figure \ref{fig:mean}. All networks are initialized with \texttt{He\;normal} weights to prevent FM1.


\subsection{FM2 for Residual Networks}\label{S:fm2-res}
In the notation of \S \ref{S:fm1-res}, failure mode FM2 is equivalent to a large expected value for the empirical variance
\[\widehat{\Var}[M^{res}]:=\frac{1}{L}\sum_{\ell=1}^L \lr{M_\ell^{res}}^2 - \lr{\frac{1}{L}\sum_{\ell=1}^L M_\ell^{res}}^2\]
of the normalized squared lengths of activations among the residual modules in $\N.$ Our main theoretical result about FM2 for ResNets is the following (see Theorem \ref{T:asymptotic-length} for the precise statement).
\begin{theorem}[FM2 for ResNets (informal)]\label{T:var-inf}
The mean $\mathbb E [\widehat{\Var}[M^{res}]]$ of the empirical variance for the lengths of activations in a residual  $\Relu$ net with $L$ residual modules and scales $\eta_\ell$ is exponential in $\sum_{\ell=1}^{L} \eta_\ell.$ By Theorem \ref{T:mean-inf-res}, this means that in ResNets, if failure mode FM1 does not occur, then neither does FM2 (assuming FM2 does not occur in individual residual modules).
\end{theorem}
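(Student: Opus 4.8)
The plan is to write $\widehat{\Var}[M^{res}]$ as an average over pairs of layers and control each pairwise term by ``restarting'' the residual recursion. I begin with the elementary decomposition
\[
\widehat{\Var}[M^{res}] \;=\; \frac{1}{2L^2}\sum_{\ell,\ell'=1}^{L}\lr{M_{\ell'}^{res}-M_\ell^{res}}^2,
\]
which reduces the theorem to showing that for $\ell<\ell'$ the quantity $\E{(M_{\ell'}^{res}-M_\ell^{res})^2}$ is bounded by $\norm{x}^4$ times something exponential in $\sum_k\eta_k$ that vanishes as $\sum_k\eta_k\to 0$. Conditioning on $\mathcal{F}_\ell$, the $\sigma$-algebra generated by the weights of modules $1,\dots,\ell$, the vector $v:=\N^{res}_\ell(x)$ is fixed, and since modules $\ell+1,\dots,\ell'$ use fresh independent weights, $M^{res}_{\ell'}$ given $\mathcal{F}_\ell$ is distributed exactly as the squared output length of a fresh $(\ell'-\ell)$-module residual net applied to $v$ and started from $\norm{v}^2$.

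The core estimates come from running the residual recursion one module at a time. For $u:=\N^{res}_{k-1}(x)$ the increment is
\[
M_k^{res}-M_{k-1}^{res} \;=\; 2\eta_k\,\inprod{u}{\N_k(u)} + \eta_k^2\,\norm{\N_k(u)}^2 ,
\]
and its conditional moments given $\mathcal{F}_{k-1}$ reduce to the single-module quantities $\E\norm{\N_k(u)}^2$, $\E\norm{\N_k(u)}^4$, $\E\inprod{u}{\N_k(u)}$ and $\E\inprod{u}{\N_k(u)}^2$ at a fixed $u$. The key point is that each of these is bounded by a constant \emph{independent of $k$} times the appropriate power of $\norm{u}^2$: one has $\E\norm{\N_k(u)}^2=\norm{u}^2$ by the FM1 statement (Theorem \ref{T:mean-inf}) applied to the module $\N_k$, whose input and output dimensions agree; $\E\norm{\N_k(u)}^4\le C\norm{u}^4$ because the modules have uniformly bounded architecture and, by hypothesis, FM2 does not occur inside a module, so the single-net analysis of Theorem \ref{T:asymptotic-length} bounds each module's fourth length moment uniformly; and the two inner-product terms are handled by Cauchy--Schwarz, giving $\E\inprod{u}{\N_k(u)}^2\le\norm{u}^2\,\E\norm{\N_k(u)}^2=\norm{u}^4$ and $|\E\inprod{u}{\N_k(u)}|\le C\norm{u}^2$, the last constant being dimension-free because the weight variance $2/\text{fan-in}$ cancels the $\sqrt{\text{fan-in}}$ coming from $\sum_i u_i\le\sqrt{\text{fan-in}}\,\norm{u}$ (in fact this term vanishes unless the module is a single layer). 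Substituting into the increment identity and using $\eta_k\le 1$ gives, with $B_k:=\E{(M^{res}_k)^2}/\norm{x}^4$, a recursion $B_k\le B_{k-1}(1+K\eta_k)$, hence $B_k\le e^{K\sum_{j\le k}\eta_j}$; running the same computation on the conditionally fixed input $v$ bounds the normalized squared deviation $E_j(v):=\E{(\widetilde M_j-\norm{v}^2)^2}/\norm{v}^4$ of a fresh $j$-module residual net started from $v$ by a recursion with $E_0(v)=0$ whose solution satisfies $E_j(v)\le C\,e^{K\sum_k\eta_k}\lr{\sum_k\eta_k}^2$, uniformly in $v$ (the factor $\lr{\sum_k\eta_k}^2$ reflects that the deviation is initially zero and picks up one factor of $\eta_k$ per module from the cross term).

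Combining the two steps, $\E{(M^{res}_{\ell'}-M^{res}_\ell)^2}\le\E{(M^{res}_\ell)^2}\cdot\sup_v E_{\ell'-\ell}(v)\le\norm{x}^4 e^{K\sum_k\eta_k}\cdot C\,e^{K\sum_k\eta_k}\lr{\sum_k\eta_k}^2$, and summing over the at most $L^2$ pairs and dividing by $2L^2$ makes the powers of $L$ cancel, yielding
\[
\E{\widehat{\Var}[M^{res}]} \;\le\; C\,\norm{x}^4\, \lr{\sum_{\ell=1}^{L}\eta_\ell}^2 e^{K\sum_{\ell=1}^{L}\eta_\ell} \;\le\; C'\,\norm{x}^4\lr{e^{K'\sum_{\ell=1}^{L}\eta_\ell}-1},
\]
which is exponential in $\sum_\ell\eta_\ell$ and tends to $0$ with it; the formal version of this inequality is the ResNet part of Theorem \ref{T:asymptotic-length}. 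The final implication is then immediate: avoiding FM1 forces $\sup_L\sum_{\ell\le L}\eta_\ell<\infty$ by Theorem \ref{T:mean-inf-res}, and the displayed bound keeps $\sup_L\E{\widehat{\Var}[M^{res}]}$ finite, so FM2 cannot occur. I expect the main difficulty to be the single-module moment bounds with constants \emph{uniform in the module index} — this is exactly where uniform boundedness of the module architectures and the ``no FM2 inside a module'' hypothesis enter — together with the bookkeeping needed to see that each module contributes a multiplicative factor $1+O(\eta_k)$ rather than $1+O(1)$, so that the recursions telescope to $e^{O(\sum_\ell\eta_\ell)}$ instead of to a quantity that blows up with the number $L$ of modules.
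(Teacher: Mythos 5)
Your proposal is correct, and the per-module estimates at its core are the same ones the paper uses: the expansion $\norm{u+\eta\N(u)}^2=\norm{u}^2+2\eta\inprod{u}{\N(u)}+\eta^2\norm{\N(u)}^2$, the martingale identity $\E{\norm{\N(u)}^2}=\norm{u}^2$, and the bound $\abs{\E{\inprod{u}{\N(u)}}}\leq C\norm{u}^2$ obtained from the half-expectation of $\abs{W_\beta^{(d)}x^{(d-1)}}$ plus Cauchy--Schwarz, which together give the multiplicative factor $1+O(\eta_\ell)$ per module and hence $\E{(M^{res}_L)^K}=\exp\lr{O\lr{\sum_\ell\eta_\ell}}$. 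Where you genuinely diverge is the final step. The paper bounds the empirical variance crudely, by discarding the centering term altogether: $\E{\widehat{\Var}[M^{res}]}\leq \frac1L\sum_\ell\E{(M_\ell^{res})^2}=\exp\lr{O\lr{\sum_\ell\eta_\ell}}$, which suffices for the stated conclusion. You instead use the pairwise identity $\widehat{\Var}[M^{res}]=\frac{1}{2L^2}\sum_{\ell,\ell'}(M_{\ell'}^{res}-M_\ell^{res})^2$ together with a conditioning/restart argument on $\mathcal F_\ell$, and a Cauchy--Schwarz treatment of the cross term in the deviation recursion to extract the extra factor $\lr{\sum_\ell\eta_\ell}^2$. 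This is more work but buys a strictly stronger statement: the empirical variance not only stays bounded when $\sum_\ell\eta_\ell<\infty$ but actually tends to zero with $\sum_\ell\eta_\ell$, which the paper's one-line bound cannot see. Two small caveats: your parenthetical claim that $\E{\inprod{u}{\N_k(u)}}$ vanishes unless the module is a single layer is false (the final layer of a multi-layer $\Relu$ module still outputs a nonnegative vector, and conditioning on the penultimate activations gives $\tfrac12\E{\abs{W_\beta x^{(d-1)}}}>0$), though nothing in your argument uses it; and the identity $\E{\norm{\N_k(u)}^2}=\norm{u}^2$ implicitly assumes the module's input and output fan-ins agree, which holds here but is worth stating since the paper's martingale is for the width-normalized lengths.
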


\subsection{Convolutional Architectures}
Our above results were stated for fully connected networks, but the logic of our proofs carries over to other architectures. In particular, similar statements hold for convolutional neural networks (ConvNets). Note that the fan-in for a convolutional layer is not given by the width of the preceding layer, but instead is equal to the number of features multiplied by the kernel size.
\begin{figure}[htbp]
\begin{center}
\floatbox[{\capbeside\thisfloatsetup{capbesideposition={right,top},capbesidewidth=6cm}}]{figure}[\FBwidth]
{\caption{Comparison of the behavior of differently initialized ConvNets as depth increases, with the number of features at each layer proportional to the overall network depth. The mean output length over different random initializations is observed to follow the same patterns as in Figure \ref{fig:mean} for fully connected networks. Weight distributions with variance $2/\text{fan-in}$ preserve output length, while other distributions lead to exponential growth or decay. The input image from CIFAR-10 is shown.}\label{fig:convnets}}
{\vspace{-10pt}
\includegraphics[scale=0.35]{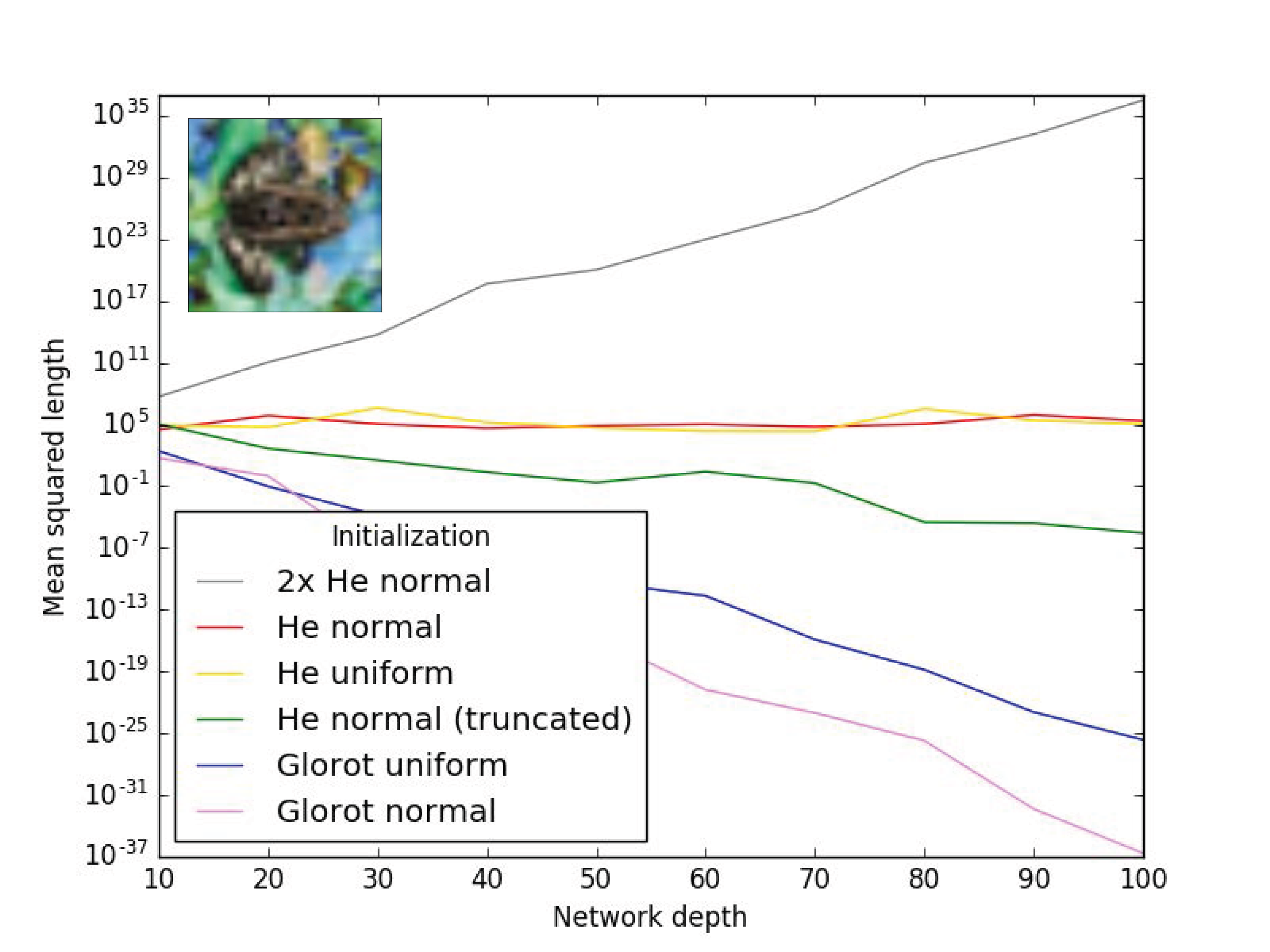}
\vspace{-15pt}}
\end{center}
\end{figure}

In Figure \ref{fig:convnets}, we show that the output length behavior we observed in fully connected networks also holds in ConvNets. Namely, mean output length equals input length for weights drawn i.i.d.~from a symmetric distribution of variance $2/\text{fan-in}$, while other variances lead to exploding or vanishing output lengths as the depth increases. In our experiments, networks were purely convolutional, with no pooling or fully connected layers. By analogy to Figure \ref{fig:mean}, the fan-in was set to approximately the depth of the network by fixing kernel size $3\times 3$ and setting the number of features at each layer to one tenth of the network's total depth. For each datapoint, the network was allowed to vary over 1,000 independent initializations, with input a fixed image from the dataset CIFAR-10 \cite{krizhevsky2009learning}.

\section{Notation}
\label{sec:notation}
To state our results formally, we first give the precise definition of the networks we study; and we introduce some notation. For every $d\geq 1$ and ${\bf n}=\lr{n_i}_{i=0}^d\in {\bf Z}_+^{d+1}$, we define
\[\mathfrak{N}({\bf n},d)=\left\{\substack{ \text{fully connected
      feed-forward nets with }\Relu\text{ activations}\\\text{and  depth
    }d,  \text{ whose } j^{th} \text{ hidden layer has width
    }n_j}\right\}.\]
Note that $n_0$ is the dimension of the input. Given $\N \in \mathfrak N({\bf n},d)$, the function $f_{\N}$ it computes is determined by its weights and biases
\[\set{w_{\alpha,\beta}^{(j)},\, b_\beta^{(j)},~ 1\leq \alpha \leq n_j,~~1\leq \beta \leq n_{j+1},\,\, 0\leq j\leq d-1}.\]
For every input $\Act^{(0)}=\lr{\Act_\alpha^{(0)}}_{\alpha=1}^{n_0}\in {\bf R}^{n_0}$ to $\N,$ we write for all $j=1,\ldots, d$
\begin{align}
 \label{E:act-def} \act_\beta^{(j)}= b_{\beta}^{(j)}+\sum_{\alpha = 1}^{n_{j-1}} \Act_\alpha^{(j-1)}w_{\alpha, \beta}^{(j)},\qquad \Act_\beta^{(j)}=\Relu(\act_\beta^{(j)}),\quad 1\leq \beta \leq n_j.
\end{align}
The vectors $\act^{(j)},\, \Act^{(j)}$ are thus the inputs and outputs of nonlinearities in the $j^{th}$ layer of $\N.$ 
\begin{definition}[Random Nets]\label{D:rand-relu}
 Fix $d\geq 1,$ positive integers ${\bf n}$ $=\lr{n_0,\ldots, n_d}\in {\bf Z}_{+}^{d+1},$ and two collections of probability measures ${\bf \mu}=\lr{\mu^{(1)},\ldots, \mu^{(d)}}$ and ${\bf \nu}=\lr{\nu^{(1)},\ldots, \nu^{(d)}}$ on ${\bf R}$ such that $\mu^{(j)},\nu^{(j)}$ are symmetric around $0$ for every $1\leq j \leq d$, and such that the variance of $\mu^{(j)}$ is $2/(n_{j-1})$.

A random network $\N \in \mathfrak N_{{\bf \mu},{\bf \nu}}\lr{{\bf n},d}$ is obtained by requiring that the weights and biases for neurons at layer $j$ are drawn independently from $\mu^{(j)},\nu^{(j)}$, respectively.
\end{definition}

\section{Formal statements} 
We begin by stating our results about fully connected networks. Given a random network $\N \in \mathfrak{N}_{\mu, \nu}\lr{d, {\bf n}}$ and an input $\Act^{(0)}$ to $\N,$ we write as in \S \ref{S:fm1-fc}, $M_j$ for the normalized square length of activations $\frac{1}{n_d}||\Act^{(j)}||^2$ at layer $j.$ Our first theoretical result, Theorem \ref{T:asymptotic-length}, concerns both the mean and variance of $M_d.$ To state it, we denote for any probability measure $\lambda$ on ${\bf R}$ its moments by
\[\lambda_k:=\int_{\bf R} x^k d\lambda(x).\]
\begin{theorem}\label{T:asymptotic-length}
For each $j\geq 0,$ fix $n_j\in {\bf Z}_{+}.$ For each $d\geq 1,$ let $\N\in \mathfrak{N}_{\mu,\nu}\lr{d, {\bf n}}$ be a fully connected $\Relu$ net with depth $d$, hidden layer widths ${\bf n}=\lr{n_j}_{j=0}^d$ as well as random weights and biases as in Definition \ref{D:rand-relu}. Fix also an input $\Act^{(0)}\in {\bf R}^{n_0}$ to $\N$ with $||\Act^{(0)}||=1.$ We have almost surely
\begin{equation}\label{E:bias-var}
\limsup_{d\gives \infty} M_d<\infty ~~ \Longleftrightarrow~~
\sum_{j\geq 1} \nu_2^{(j)}<\infty. 
\end{equation}
Moreover, if \eqref{E:bias-var} holds, then exists a random variable $M_\infty$ (that is almost surely finite) such that $M_d\gives M_\infty$ as $d\gives \infty$ pointwise almost surely. Further, suppose $\mu_4^{(j)}<\infty$ for all
$j\geq 1$ and that $\sum_{j=1}^\infty \lr{\nu_2^{(j)}}^2<\infty.$ Then
\begin{equation}\label{E:var-est}
\exp\left[\frac{1}{2}\sum_{j=1}^d \frac{1}{n_j}\right] ~ \leq ~\E{M_d^2} ~\leq~C_1\exp\left[C_2 \sum_{j=1}^d \frac{1}{n_j}\right],
\end{equation}
where $C_1,C_2$ the following finite constants:
\[C_1=1+M_0^2 +\lr{M_0+\sum_{j=1}^\infty \nu_2^{(j)}}\sum_{j=1}^\infty \nu_2^{(j)}\]
and
\[C_2 = \sup_{d\geq 1}\max \set{1,\frac{1}{4}\nu_4^{(d)}-\frac{1}{2}\lr{\nu_2^{(d)}}^2,2|\twiddle{\mu}_4^{(d)}-3|}\lr{2+M_0+\sum_{j=1}^\infty \nu_2^{(j)}}.\]
If $M_0=1$ and $\sum_{j=1}^\infty \nu_2^{(j)},\, \nu_4^{(d)}\leq 1$ and $\mu^{(j)}$ is Gaussian for all $j\geq 1$, then
\[C_1=5,\qquad C_2=4.\]
In particular, $\Var[M_d]$ is exponential in $\sum_{j=1}^d 1/n_j$ and if $~\sum_j 1/n_j<\infty,$ then the convergence of $M_d$ to $M_\infty$ is in $L^2$ and $\Var[M_\infty]<\infty.$
\end{theorem}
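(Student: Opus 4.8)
The engine behind every part of the statement is the following one-step description. Let $\mathcal{F}_{j}$ be the $\sigma$-algebra generated by all weights and biases in layers $1,\dots,j$ (equivalently by $\Act^{(1)},\dots,\Act^{(j)}$). Conditioned on $\mathcal{F}_{j-1}$, the pre-activations $\act^{(j)}_{1},\dots,\act^{(j)}_{n_j}$ are i.i.d., and each is a sum of independent symmetric random variables, hence itself symmetric about $0$, with conditional variance $\nu_{2}^{(j)}+\tfrac{2}{n_{j-1}}\norm{\Act^{(j-1)}}^{2}=\nu_{2}^{(j)}+2M_{j-1}$. The decisive elementary fact is that $\E{\Relu(X)^{k}}=\tfrac12\E{X^{k}}$ for symmetric $X$, so $\E{(\Act^{(j)}_{\beta})^{2}\mid\mathcal{F}_{j-1}}=M_{j-1}+\tfrac12\nu_{2}^{(j)}$, whence
\[\E{M_{j}\mid\mathcal{F}_{j-1}}=M_{j-1}+\tfrac12\nu_{2}^{(j)},\qquad\text{and so}\qquad \E{M_{d}}=M_{0}+\tfrac12\sum_{j=1}^{d}\nu_{2}^{(j)}.\]
Thus $M_{j}=M_{0}+\tfrac12\sum_{i\le j}\nu_{2}^{(i)}+N_{j}$, where $N_{j}:=\sum_{i\le j}\bigl(M_{i}-\E{M_{i}\mid\mathcal{F}_{i-1}}\bigr)$ is a mean-zero $(\mathcal{F}_{j})$-martingale. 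This Doob decomposition drives the almost sure statements.

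For the dichotomy \eqref{E:bias-var}: if $\sum_{i\ge1}\nu_{2}^{(i)}<\infty$, then $M_{j}+\tfrac12\sum_{i>j}\nu_{2}^{(i)}=M_{0}+\tfrac12\sum_{i\ge1}\nu_{2}^{(i)}+N_{j}$ is a \emph{nonnegative} martingale (the tail sum is $\ge0$ and $M_{j}\ge0$), hence converges a.s.\ by the martingale convergence theorem; subtracting the convergent deterministic part gives $M_{j}\gives M_{\infty}$ a.s.\ with $M_{\infty}<\infty$, which is the ``$\Leftarrow$'' direction together with the asserted a.s.\ convergence. If instead $\sum_{i\ge1}\nu_{2}^{(i)}=\infty$, then $\E{M_{d}}\gives\infty$; to upgrade this to $\limsup_{d}M_{d}=\infty$ a.s., I would fix $K$, apply optional stopping to the submartingale $M_{d}$ at $\tau_{K}=\inf\{d:M_{d}>K\}$, and use divergence of the predictable drift $\tfrac12\sum_{i\le d}\nu_{2}^{(i)}$ to force $\P(\tau_{K}=\infty)=0$ for every $K$.

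The quantitative heart is a recursion for $\E{M_{j}^{2}}$. Conditional independence gives $\E{M_{j}^{2}\mid\mathcal{F}_{j-1}}=\tfrac1{n_{j}}\E{(\Act^{(j)}_{1})^{4}\mid\mathcal{F}_{j-1}}+\tfrac{n_{j}-1}{n_{j}}\bigl(M_{j-1}+\tfrac12\nu_{2}^{(j)}\bigr)^{2}$, and, expanding the fourth moment of the sum of independent symmetric terms defining $\act^{(j)}_{1}$ and using $\E{\Relu(X)^{4}}=\tfrac12\E{X^{4}}$,
\[\E{(\Act^{(j)}_{1})^{4}\mid\mathcal{F}_{j-1}}=\tfrac12\Bigl(\nu_{4}^{(j)}+12\,\nu_{2}^{(j)}M_{j-1}+12\,M_{j-1}^{2}+\tfrac{4(\twiddle{\mu}_{4}^{(j)}-3)}{n_{j-1}^{2}}\sum_{\alpha=1}^{n_{j-1}}(\Act^{(j-1)}_{\alpha})^{4}\Bigr).\]
Collecting terms yields $\E{M_{j}^{2}\mid\mathcal{F}_{j-1}}=\bigl(1+\tfrac5{n_{j}}\bigr)M_{j-1}^{2}+R_{j}$, where $R_{j}$ is affine in $M_{j-1}$ with coefficients built from $\nu_{2}^{(j)},\nu_{4}^{(j)}$, plus a term proportional to $(\twiddle{\mu}_{4}^{(j)}-3)\,n_{j}^{-1}n_{j-1}^{-2}\sum_{\alpha}(\Act^{(j-1)}_{\alpha})^{4}$. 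To close the recursion in $\E{M_{j}^{2}}$ alone I would bound $\sum_{\alpha}(\Act^{(j-1)}_{\alpha})^{4}\le\bigl(\sum_{\alpha}(\Act^{(j-1)}_{\alpha})^{2}\bigr)^{2}=n_{j-1}^{2}M_{j-1}^{2}$ by Cauchy--Schwarz, which turns the kurtosis term into a contribution of size at most $\tfrac{2|\twiddle{\mu}_{4}^{(j)}-3|}{n_{j}}M_{j-1}^{2}$ --- exactly the source of $2|\twiddle{\mu}_{4}^{(d)}-3|$ in $C_{2}$. Taking expectations, bounding the affine terms via $\E{M_{j-1}}\le M_{0}+\tfrac12\sum_{i}\nu_{2}^{(i)}$, and iterating the resulting estimate $\E{M_{j}^{2}}\le\bigl(1+\tfrac{C_{2}}{n_{j}}\bigr)\E{M_{j-1}^{2}}+(\text{summable additive term})$ with $1+x\le e^{x}$ gives $\E{M_{d}^{2}}\le C_{1}\exp[C_{2}\sum_{j\le d}n_{j}^{-1}]$, the accumulated bias-driven terms summing into $C_{1}$ precisely because $\sum_{j}(\nu_{2}^{(j)})^{2}<\infty$. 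For the lower bound, when the kurtosis term is nonnegative (Gaussian weights, $\twiddle{\mu}_{4}=3$, or any leptokurtic law) all of $R_{j}$ is nonnegative, so $\E{M_{j}^{2}}\ge\bigl(1+\tfrac1{n_{j}}\bigr)\E{M_{j-1}^{2}}$, and iterating with $\log(1+x)\ge x/2$ on $[0,1]$ yields $\E{M_{d}^{2}}\ge\exp[\tfrac12\sum_{j\le d}n_{j}^{-1}]$ (using the input normalization $M_{0}=1$); the explicit values $C_{1}=5,\ C_{2}=4$ in the Gaussian case come from tracking constants under $M_{0}=1$, $\sum\nu_{2}^{(j)}\le1$, $\nu_{4}^{(j)}\le1$.

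Finally, when $\sum_{j}n_{j}^{-1}<\infty$ the upper bound gives $\sup_{d}\E{M_{d}^{2}}<\infty$, so the martingale $N_{d}$ is bounded in $L^{2}$ and hence converges in $L^{2}$; therefore $M_{d}\gives M_{\infty}$ in $L^{2}$, and $\Var[M_{\infty}]=\lim_{d}\Var[M_{d}]<\infty$, with $\Var[M_{d}]=\E{M_{d}^{2}}-(\E{M_{d}})^{2}$ pinched between exponentials in $\sum_{j\le d}n_{j}^{-1}$ by the two displayed bounds and the formula for $\E{M_{d}}$. I expect the main obstacle to be the quantitative step: converting the (essentially exact) second-moment recursion into clean two-sided exponential bounds with the stated explicit constants, uniformly over admissible weight laws --- in particular controlling $\sum_{\alpha}(\Act^{(j-1)}_{\alpha})^{4}$ (the Cauchy--Schwarz bound is lossy, and the kurtosis term's sign is adverse for the lower bound when weights are platykurtic while it inflates $C_{2}$ when they are leptokurtic), and carefully collapsing the accumulated bias-driven terms into the finite constant $C_{1}$. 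The a.s.\ converse in the dichotomy --- upgrading $\E{M_{d}}\gives\infty$ to $\limsup_{d}M_{d}=\infty$ almost surely --- is a secondary but delicate point.
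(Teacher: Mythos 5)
Your proposal follows essentially the same route as the paper: the sign-flip symmetrization giving $\E{\Relu(X)^k}=\tfrac12\E{X^k}$ for symmetric pre-activations, the resulting submartingale/Doob decomposition and exact mean formula, martingale convergence for the almost-sure limit, the conditional fourth-moment expansion with the excess-kurtosis correction term, the Cauchy--Schwarz bound $\norm{\Act^{(j-1)}}_4^4\le\norm{\Act^{(j-1)}}^4$, and iteration of the two-sided second-moment recursion with $1+x\le e^x$ and $\log(1+x)\ge x/2$. The one genuinely different (and slightly cleaner) touch is packaging the convergent case as a nonnegative martingale $M_j+\tfrac12\sum_{i>j}\nu_2^{(i)}$ rather than invoking Doob's theorem for an $L^1$-bounded submartingale; these are equivalent in substance.

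There is one concrete gap you flag but do not close: the lower bound in \eqref{E:var-est} for platykurtic weight laws. Your recursion gives a coefficient $1+\tfrac{5+2(\twiddle{\mu}_4^{(j)}-3)}{n_j}$ on $\E{M_{j-1}^2}$ (before discarding the nonnegative affine remainder), and you only conclude $\E{M_j^2}\ge(1+\tfrac1{n_j})\E{M_{j-1}^2}$ when $\twiddle{\mu}_4^{(j)}\ge3$. The missing one-line ingredient, which the paper uses, is that the excess kurtosis of \emph{any} probability measure satisfies $\twiddle{\mu}_4^{(j)}-3\ge-2$, so together with $\norm{\Act^{(j-1)}}_4^4\le\norm{\Act^{(j-1)}}^4$ the adverse kurtosis contribution is at least $-\tfrac{4}{n_j}M_{j-1}^2$ and the coefficient of $M_{j-1}^2$ never drops below $1+\tfrac1{n_j}$; this makes the lower bound unconditional. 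Separately, for the divergence direction of \eqref{E:bias-var} your optional-stopping sketch at $\tau_K$ still needs control of the overshoot $\E{M_{\tau_K}{\bf 1}_{\set{\tau_K\le d}}}$ to conclude $\P(\tau_K=\infty)=0$; note, however, that the paper's own one-sentence treatment of this almost-sure statement is no more detailed, so you are not behind the paper here, merely equally terse.
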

The proof of Theorem \ref{T:asymptotic-length} is deferred to the Supplementary Material. Although we state our results only for fully connected feed-forward $\Relu$ nets, the proof techniques carry over essentially verbatim to any feed-forward network in which only weights in the same hidden layer are tied. In particular, our results apply to convolutional networks in which the kernel sizes are uniformly bounded. In this case, the constants in Theorem \ref{T:asymptotic-length} depend on the bound for the kernel dimensions, and $n_j$ denotes the fan-in for neurons in the $(j+1)^{st}$ hidden layer (i.e. the number of channels in layer $j$ multiplied by the size of the appropriate kernel). We also point out the following corollary, which follows immediately from the proof of Theorem \ref{T:asymptotic-length}.


\begin{corollary}[FM1 for Fully Connected Networks]\label{C:bad-init}
With notation as in Theorem \ref{T:asymptotic-length}, suppose that for all $j=1,\ldots, d,$ the weights in layer $j$ of $\N_d$ have variance $\kappa\cdot 2/n_j$ for some $\kappa>0.$ Then the average squared size $M_d$ of activations at layer $d$ will grow or decay exponentially unless $\kappa=1$:
\[\E{\frac{1}{n_d}\norm{\Act^{(d)}}^2}= \frac{\kappa^d }{n_0}\norm{\Act^{(0)}}^2 + \sum_{j=1}^d \kappa^{d-j}\nu_2^{(d)}.\]
\end{corollary}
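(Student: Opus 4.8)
The plan is to reduce the corollary to a single one-step recursion for $\E{M_j}$, obtained by conditioning on the activations of the previous layer; this is precisely the first-moment calculation that is already carried out inside the proof of Theorem \ref{T:asymptotic-length}. Write $M_j = \tfrac{1}{n_j}\norm{\Act^{(j)}}^2$ (so that $M_0 = \tfrac{1}{n_0}\norm{\Act^{(0)}}^2$), and let $\mathcal F_{j-1}$ denote the $\sigma$-algebra generated by all weights and biases in layers $1,\dots,j-1$ — equivalently, the smallest $\sigma$-algebra making $\Act^{(j-1)}$ measurable. I will establish
\[
\E{M_j \mid \mathcal F_{j-1}} \;=\; \kappa\, M_{j-1} \;+\; \tfrac12\,\nu_2^{(j)},
\]
and then take total expectations and unroll this linear recursion.

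The key steps are as follows. First, condition on $\mathcal F_{j-1}$: by \eqref{E:act-def}, $\act^{(j)}_\beta = b^{(j)}_\beta + \sum_\alpha \Act^{(j-1)}_\alpha\, w^{(j)}_{\alpha,\beta}$ is, given $\Act^{(j-1)}$, a sum of independent random variables each symmetric about $0$ — the $w^{(j)}_{\alpha,\beta}$ and $b^{(j)}_\beta$ are symmetric about $0$ by Definition \ref{D:rand-relu} and are independent of $\mathcal F_{j-1}$, and multiplying $w^{(j)}_{\alpha,\beta}$ by the $\mathcal F_{j-1}$-measurable number $\Act^{(j-1)}_\alpha$ preserves symmetry — so the conditional law of $\act^{(j)}_\beta$ is symmetric about $0$. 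Second, apply the elementary identity $\E{\Relu(X)^2} = \tfrac12 \E{X^2}$, valid for any $X$ with symmetric law, to this conditional law to get $\E{(\Act^{(j)}_\beta)^2 \mid \mathcal F_{j-1}} = \tfrac12\,\E{(\act^{(j)}_\beta)^2 \mid \mathcal F_{j-1}}$. Third, expand the conditional second moment using independence and zero means: since each $w^{(j)}_{\alpha,\beta}$ has the prescribed variance $2\kappa/\text{fan-in}$,
\[
\E{(\act^{(j)}_\beta)^2 \mid \mathcal F_{j-1}} \;=\; \nu_2^{(j)} \;+\; \frac{2\kappa}{n_{j-1}}\sum_\alpha \lr{\Act^{(j-1)}_\alpha}^2 \;=\; \nu_2^{(j)} \;+\; 2\kappa\, M_{j-1}.
\]
Combining, summing over $\beta = 1,\dots,n_j$, and dividing by $n_j$ gives the displayed recursion; taking expectations yields $\E{M_j} = \kappa\,\E{M_{j-1}} + \tfrac12\nu_2^{(j)}$, and iterating from layer $d$ back to layer $0$ produces $\E{M_d} = \kappa^d M_0 + \sum_{j=1}^d \kappa^{d-j}\tfrac12\nu_2^{(j)}$, i.e.\ the claimed closed form (the stated display should carry the $\tfrac12$ on the bias sum and be indexed by $j$). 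The growth/decay dichotomy is then immediate: when $\kappa \neq 1$ the geometric factor $\kappa^d$ in the $M_0$ term forces $\E{M_d}$ to explode or vanish exponentially in $d$, whereas $\kappa = 1$ makes the recursion purely additive.

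I do not anticipate a genuine obstacle here: this is a first-moment identity and, unlike the variance bound \eqref{E:var-est}, it needs no control of the fourth moments of $\mu^{(j)}$ or $\nu^{(j)}$. The only point requiring care in the write-up is the conditioning/symmetrization bookkeeping — one must check that the layer-$j$ weights and biases are truly independent of $\mathcal F_{j-1}$, so that each preactivation has a centered, symmetric \emph{conditional} law, which is exactly what licenses the factor $\tfrac12$ coming from the $\Relu$. Everything after that is a routine expansion of a conditional second moment followed by a sum over output coordinates.
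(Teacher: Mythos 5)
Your proof is correct and follows essentially the same route as the paper: the corollary is obtained by rerunning the first-moment recursion $\E{M_j \mid \mathcal F_{j-1}} = \kappa M_{j-1} + \tfrac12 \nu_2^{(j)}$ from the proof of Theorem \ref{T:asymptotic-length}, with the symmetrization of the conditional law of the preactivations supplying the factor $\tfrac12$ (the paper phrases this as a sign-flip of the layer-$j$ weights and biases, which is the same idea). Your remark that the displayed formula should carry the $\tfrac12$ on the bias sum and be indexed by $j$ rather than $d$ is also right, as one can check by comparing with \eqref{E:eval-M1} at $\kappa=1$.
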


Our final result about fully connected networks is a corollary of Theorem \ref{T:asymptotic-length}, which explains precisely when failure mode FM2 occurs (see \S \ref{S:fm2-fc}). It is proved in the Supplementary Material.
\begin{corollary}[FM2 for Fully Connected Networks]\label{C:empvar-fc}
  Take the same notation as in Theorem \ref{T:asymptotic-length}. There exist $c,C>0$ so that
  \begin{equation}
  \frac{c}{d}\sum_{j=1}^d \frac{j-1}{d} \exp\lr{c\sum_{k=j}^{d-1} \frac{1}{n_k}}~~\leq ~~\E{\widehat{\Var}[M]}~~\leq ~~\frac{C}{d}\sum_{j=1}^d \frac{j-1}{d} \exp\lr{C\sum_{k=j}^{d-1} \frac{1}{n_k}}.\label{E:empvar-bound}
\end{equation}
In particular, suppose the hidden layer widths are all equal, $n_j=n.$ Then, $\mathbb E[\widehat{\Var}[M]]$ is exponential in $\beta=\sum_j 1/n_j = (d-1)/n$ in the sense that there exist $c,C>0$ so that
\begin{equation}\label{E:equal-width-empvar}
c \exp \lr{c \beta} ~~\leq ~~ \E{\widehat{\Var}[M]}~~\leq ~~C \exp \lr{C\beta}.
\end{equation}
\end{corollary}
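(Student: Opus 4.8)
The plan is to write $\E{\widehat{\Var}[M]}$ exactly as a weighted sum of the one-step increments of $j\mapsto\E{M_j^2}$, and then to estimate those increments with the same single-layer second-moment recursion that underlies Theorem~\ref{T:asymptotic-length}. First I would use the elementary identity $\widehat{\Var}[M]=\frac{1}{2d^2}\sum_{i,j=1}^{d}(M_i-M_j)^2$, so that $\E{\widehat{\Var}[M]}=\frac{1}{d^2}\sum_{1\le i<j\le d}\E{(M_i-M_j)^2}$. Let $\mathcal F_j$ be the $\sigma$-algebra generated by the weights and biases of layers $1,\dots,j$. The conditional computation behind Corollary~\ref{C:bad-init} (the case $\kappa=1$) gives $\E{M_j\mid\mathcal F_{j-1}}=M_{j-1}+\nu_2^{(j)}$, which comes from the facts that the preactivations $\act^{(j)}_\beta$ are symmetric given $\mathcal F_{j-1}$, that $\Relu$ halves the conditional second moment, and that the $2/\text{fan-in}$ normalization compensates exactly. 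Taking the biases to vanish for clarity (in general they add only a deterministic drift bounded by $\sum_j\nu_2^{(j)}<\infty$, whose effect on \eqref{E:empvar-bound} is an additive constant, harmless in both regimes), the sequence $(M_j)$ is an $(\mathcal F_j)$-martingale, so $\E{M_iM_j}=\E{M_i^2}$ for $i<j$ and $\E{(M_i-M_j)^2}=\E{M_j^2}-\E{M_i^2}$. Writing $\Delta_m:=\E{M_m^2}-\E{M_{m-1}^2}\ge 0$ (nonnegative by conditional Jensen, since $\E{M_j^2}$ is nondecreasing along a martingale) and expanding each difference as $\sum_{m=i+1}^{j}\Delta_m$, a one-line Abel rearrangement gives
\[
\E{\widehat{\Var}[M]}\;=\;\frac{1}{d^2}\sum_{m=1}^{d}(m-1)(d-m+1)\,\Delta_m .
\]

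The heart of the argument is then a two-sided estimate of $\Delta_m$. Conditioning on $\mathcal F_{m-1}$, the coordinates $\act^{(m)}_\beta$ are i.i.d.\ in $\beta$, which yields $\E{M_m^2\mid\mathcal F_{m-1}}=M_{m-1}^2+\frac{1}{n_m}\bigl(\E{\Relu(\act^{(m)}_1)^4\mid\mathcal F_{m-1}}-M_{m-1}^2\bigr)$, and by symmetry $\E{\Relu(\act^{(m)}_1)^4\mid\mathcal F_{m-1}}=\tfrac12\E{(\act^{(m)}_1)^4\mid\mathcal F_{m-1}}$. Conditional Jensen bounds this fourth moment below by $\bigl(\E{(\act^{(m)}_1)^2\mid\mathcal F_{m-1}}\bigr)^2=(2M_{m-1})^2$, while expanding it over the independent symmetric weights (variance $2/n_{m-1}$, normalized fourth moment $\twiddle{\mu}_4^{(m)}$) and using $\norm{\Act^{(m-1)}}_4^4\le\norm{\Act^{(m-1)}}^4$ bounds it above by $(\twiddle{\mu}_4^{(m)}+3)(2M_{m-1})^2$. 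Taking expectations gives
\[
\frac{1}{n_m}\,\E{M_{m-1}^2}\;\le\;\Delta_m\;\le\;\frac{C_0}{n_m}\,\E{M_{m-1}^2},\qquad C_0:=5+2\sup_{m\ge1}\twiddle{\mu}_4^{(m)} .
\]

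Finally I would substitute the two-sided bound $\exp\!\bigl(\tfrac12\sum_{k=1}^{m-1}\tfrac1{n_k}\bigr)\le\E{M_{m-1}^2}\le C_1\exp\!\bigl(C_2\sum_{k=1}^{m-1}\tfrac1{n_k}\bigr)$ from \eqref{E:var-est} into the displayed closed form and carry out an elementary comparison of sums to reach \eqref{E:empvar-bound}: the lower constant $c$ comes from the $1/n_m$ factor and the $\tfrac12$ in the exponent, the upper constant $C$ from $C_0,C_1,C_2$, and the rearrangement uses monotonicity of $m\mapsto\E{M_m^2}$ together with the splitting $\sum_{k=1}^{m-1}\tfrac1{n_k}=\sum_{k=1}^{d-1}\tfrac1{n_k}-\sum_{k=m}^{d-1}\tfrac1{n_k}$. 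Specializing to $n_j\equiv n$, the weighted sum $\frac1{d^2}\sum_m (m-1)(d-m+1)\,\tfrac1n\,e^{c(m-1)/n}$ is, up to factors polynomial in $\beta=(d-1)/n$ that get absorbed by adjusting the constants, of order $e^{c\beta}$, which is \eqref{E:equal-width-empvar}. The main obstacle will be the increment estimate above: it amounts to running the single-layer second-moment recursion for $M$ and, in particular, controlling the auxiliary quantity $\norm{\Act^{(m-1)}}_4^4$ produced by the non-Gaussianity of the weights — but this is exactly the recursion already established in the proof of Theorem~\ref{T:asymptotic-length}, so it can be imported rather than redone. The secondary nuisance is the final summation step, where one must check that the double sum, which naturally carries $\exp(\sum_{k<m}1/n_k)$, is comparable — up to the allowed freedom in $c$ versus $C$ — to the form appearing in \eqref{E:empvar-bound}.
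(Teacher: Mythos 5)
Your proposal is correct and rests on the same two pillars as the paper's argument: the martingale property of $M_j$ (biases zero), which collapses all cross-moments $\E{M_iM_j}$ to $\E{M_{\min\{i,j\}}^2}$, and the two-sided second-moment bounds of Theorem~\ref{T:asymptotic-length}. Where you differ is in the bookkeeping, and your version is actually the more robust one. The paper expands $\E{\widehat{\Var}[M]}=\frac1d\sum_j\E{M_j^2}-\frac1{d^2}\sum_{j,j'}\E{M_{\min\{j,j'\}}^2}$ directly and reads off a coefficient $\frac{j-1}{d}$ in front of $\E{M_j^2}$; done with the exact pair count $2(d-j)+1$, that coefficient is $\frac{2j-d-1}{d}$, which is \emph{negative} for $j<(d+1)/2$, so one cannot simply drop terms to get the lower bound. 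Your rearrangement into nonnegative increments $\Delta_m=\E{M_m^2}-\E{M_{m-1}^2}$ with weights $(m-1)(d-m+1)\ge 0$ is algebraically the same identity (Abel summation turns one into the other) but makes the lower bound legitimate; your two-sided estimate $\frac1{n_m}\E{M_{m-1}^2}\le\Delta_m\le\frac{C_0}{n_m}\E{M_{m-1}^2}$ is exactly the one-step recursion $\E{M_d^2\mid\mathcal F_{d-1}}\ge M_{d-1}^2\lr{1+\frac1{n_d}}$ (and its upper counterpart) already proved via Lemma~\ref{L:est}, so importing it is fine. The one step you have not actually closed is the conversion of your exponent $\sum_{k=1}^{m-1}1/n_k$ into the form $\sum_{k=j}^{d-1}1/n_k$ appearing in \eqref{E:empvar-bound}: the splitting $\sum_{k=1}^{m-1}=\sum_{k=1}^{d-1}-\sum_{k=m}^{d-1}$ puts the tail sum into the exponential with a \emph{negative} sign, so for non-constant widths your bound and \eqref{E:empvar-bound} are genuinely different expressions and neither dominates the other in general. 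Be aware, though, that the paper's own proof has the same seam --- it invokes $\E{M_j^2}\ge c\exp\lr{c\sum_{k=j}^{d-1}1/n_k}$, which does not follow from \eqref{E:var-est} as stated (that gives the exponent $\frac12\sum_{k=1}^{j}1/n_k$) --- so this is a defect of the target display rather than of your method. For the constant-width conclusion \eqref{E:equal-width-empvar}, where the two index conventions agree up to relabeling and the polynomial-in-$\beta$ prefactors are absorbed into the constants, your argument is complete.
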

Finally, our main result about residual networks is the following:  
\begin{theorem}\label{T:res-act}
Take the notation from \S \ref{S:fm1-res} and \S \ref{S:fm2-res}. The mean of squared activations in $\N^{res}_L$ are uniformly bounded in the number of modules $L$ if and only if the scales $\eta_\ell$ form a convergent series:
\begin{equation}\label{E:res-cond}
0<\sup_{\substack{L\geq 1,\,\, \norm{x}=1}}M^{res}_L<\infty \qquad \Longleftrightarrow \qquad \sum_{\ell=1}^\infty \eta_\ell <\infty.
\end{equation}
Moreover, for any sequence of scales $\eta_\ell$ for which $\sup_\ell \eta_\ell <1$ and for every $K,L\geq 1,$ we have
$$
\E{(M^{res}_L)^K}=\exp \lr{O\lr{\sum_{\ell=1}^L \eta_\ell}},
$$
where the implied constant depends on $K$ but not on $L.$ Hence, once the condition in part \eqref{E:res-cond} holds, both the moments $\E{(M^{res}_L)^K}$ and the mean of the empirical variance of $\set{M_\ell^{res},\,\,\ell=1,\ldots, L}$ are uniformly bounded in L.  
\label{thm:resnet}
\end{theorem}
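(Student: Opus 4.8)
The plan is to track $M^{res}_\ell = \norm{\N^{res}_\ell(x)}^2$ through the recursion $\N^{res}_\ell(x) = \N^{res}_{\ell-1}(x) + \eta_\ell\,\N_\ell(\N^{res}_{\ell-1}(x))$ by conditioning, at each step $\ell$, on the $\sigma$-algebra $\mathcal F_{\ell-1}$ generated by the weights of the first $\ell-1$ modules, so that $v_{\ell-1}:=\N^{res}_{\ell-1}(x)$ is $\mathcal F_{\ell-1}$-measurable while the weights of $\N_\ell$ are fresh. Writing $R_\ell:=\norm{\N_\ell(v_{\ell-1})}/\norm{v_{\ell-1}}$, positive homogeneity of $\Relu$ makes $R_\ell$ a function of the direction $v_{\ell-1}/\norm{v_{\ell-1}}$ and the weights of $\N_\ell$ only. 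I would use three facts throughout. First, $\E{R_\ell^2\mid\mathcal F_{\ell-1}}=1$: this is Corollary~\ref{C:bad-init} applied to the single module $\N_\ell$, which has zero biases and weight variance $2/\text{fan-in}$. Second, since the modules have uniformly bounded architecture (and, where needed, do not themselves suffer FM2), all higher conditional moments $\E{R_\ell^i\mid\mathcal F_{\ell-1}}$ for $i\le 2K$ are bounded by finite constants $D_i$ independent of $\ell$. Third, because $\N_\ell(\cdot)$ is componentwise nonnegative, $\E{\N_\ell(v)\mid v}=\norm v\,g_\ell$ for a deterministic nonnegative vector $g_\ell$ (depending only on the direction of $v$) with $\norm{g_\ell}_2\le1$ and $\inprod{\mathbf 1}{g_\ell}=\E{\norm{\N_\ell(v)}_1\mid v}/\norm v\ge\E{R_\ell\mid\mathcal F_{\ell-1}}\ge\tilde\gamma>0$, the last inequality by H\"older from the first two facts.

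\emph{Upper bounds.} The triangle inequality gives $M^{res}_\ell\le M^{res}_{\ell-1}(1+\eta_\ell R_\ell)^2$. Raising to the $K$-th power, conditioning on $\mathcal F_{\ell-1}$, expanding $(1+\eta_\ell R_\ell)^{2K}$ by the binomial theorem, and using $\sup_\ell\eta_\ell<1$ to replace each $\eta_\ell^i$ ($i\ge1$) by $\eta_\ell$ yields $\E{(M^{res}_\ell)^K\mid\mathcal F_{\ell-1}}\le (M^{res}_{\ell-1})^K(1+C_K\eta_\ell)$ with $C_K=\sum_{i=1}^{2K}\binom{2K}{i}D_i$ independent of $\ell$ and $L$; iterating and using $1+t\le e^t$ gives $\E{(M^{res}_L)^K}\le\exp(C_K\sum_{\ell=1}^L\eta_\ell)$. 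For the matching lower bound, expand $M^{res}_\ell=M^{res}_{\ell-1}(1+\eta_\ell^2)+2\eta_\ell\inprod{v_{\ell-1}}{\N_\ell(v_{\ell-1})}$, take $\E{\cdot\mid\mathcal F_{\ell-1}}$, and bound the cross term by Cauchy--Schwarz together with $\norm{g_\ell}_2\le1$ to get $\E{M^{res}_\ell\mid\mathcal F_{\ell-1}}\ge M^{res}_{\ell-1}(1-\eta_\ell)^2$, hence $\E{M^{res}_L}\ge\prod_\ell(1-\eta_\ell)^2\ge\exp(-O(\sum_\ell\eta_\ell))$ once $\sup_\ell\eta_\ell$ is bounded away from $1$. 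Together these give $\E{(M^{res}_L)^K}=\exp(O(\sum_{\ell=1}^L\eta_\ell))$, with implied constant depending on $K$ but not $L$. When $\sum_\ell\eta_\ell<\infty$ all of these bounds are uniform in $L$; since $\widehat{\Var}[M^{res}]\le\frac1L\sum_\ell(M^{res}_\ell)^2$, so is $\E{\widehat{\Var}[M^{res}]}$; and from $\norm{v_\ell}\le\prod_{k\le\ell}(1+\eta_k R_k)$ with $\E{\sum_k\eta_k R_k}\le\sum_k\eta_k<\infty$ one gets $\sup_{L,\norm x=1}M^{res}_L<\infty$ almost surely. Positivity of the supremum is immediate from $M^{res}_0=1$. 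This handles the ``$\Leftarrow$'' of \eqref{E:res-cond}, the moment estimate, and the concluding ``Hence'' clause.

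\emph{The hard direction:} $\sum_\ell\eta_\ell=\infty$ must force the $M^{res}_L$ to be unbounded, and this is the step I expect to be the main obstacle, because a direct drift argument on $\E{M^{res}_\ell}$ founders on the cross term $\inprod{v_{\ell-1}}{\N_\ell(v_{\ell-1})}$, whose conditional mean $\norm{v_{\ell-1}}\inprod{v_{\ell-1}}{g_\ell}$ has a priori indefinite sign. The resolution I would pursue again exploits componentwise nonnegativity of $\N_\ell(\cdot)$: from $v_\ell=x+\sum_{k\le\ell}\eta_k\N_k(v_{k-1})$, each coordinate of $v_\ell$ is nondecreasing in $\ell$, so $v_\ell$ converges coordinatewise almost surely to some $v_\infty\in(-\infty,+\infty]^{n_0}$. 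If some coordinate of $v_\infty$ equals $+\infty$ on a set of positive probability, then $M^{res}_\ell=\norm{v_\ell}^2\to\infty$ there and $\E{M^{res}_L}\to\infty$ by monotone convergence. Otherwise $v_\infty$ is finite — and nonzero, the event $v_\infty=0$ being null since the module outputs have continuous conditional laws — almost surely; then, using nonnegativity of the summands, $\sum_k\eta_k\norm{\N_k(v_{k-1})}_1=\norm{v_\infty-x}_1<\infty$, and dividing out $\norm{v_{k-1}}\to\norm{v_\infty}>0$ gives $\sum_k\eta_k\norm{\N_k(v_{k-1}/\norm{v_{k-1}})}_1<\infty$ almost surely. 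But the conditional mean of the $k$-th summand is $\eta_k\inprod{\mathbf 1}{g_k}\ge\tilde\gamma\eta_k$, so $\sum_k\E{\eta_k\norm{\N_k(v_{k-1}/\norm{v_{k-1}})}_1\mid\mathcal F_{k-1}}=\infty$; after a truncation to make the summands bounded (for which $\E{R_k^2\mid\mathcal F_{k-1}}=1$ supplies the needed second-moment control), the conditional (L\'evy) Borel--Cantelli lemma forces the sum itself to diverge almost surely — a contradiction. Hence the ``finite $v_\infty$'' case has probability zero, so in fact $M^{res}_\ell\to\infty$ almost surely when $\sum_\ell\eta_\ell=\infty$; this is the negation of the left-hand side of \eqref{E:res-cond} under both its natural readings (almost sure, or $\sup_L\E{M^{res}_L}$ as in \eqref{E:approx-isom}), completing the equivalence.
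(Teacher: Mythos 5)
Your proposal is correct and, on the core moment estimate, arrives where the paper does, but by a partly different route, and it goes further than the paper on the equivalence \eqref{E:res-cond}. The paper's proof isolates a single lemma, $\E{\norm{x+\eta\N(x)}^2}=\norm{x}^2\lr{1+O(\eta)}$, proved by expanding the square and bounding the cross term $\E{\inprod{\widehat{x}}{\N(\widehat{x})}}=O(1)$ via $\E{\N_\beta(\widehat{x})}\leq \norm{x^{(d-1)}}/\sqrt{2n}$ and $\norm{\widehat{x}}_1\leq\sqrt{n}\,\norm{\widehat{x}}_2$; iterating gives $\E{M^{res}_L}=\exp\lr{O\lr{\sum_\ell\eta_\ell}}$, and the higher moments are asserted to follow ``by the same procedure'' with the computation omitted. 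Your upper bound replaces that omitted step with the pointwise inequality $M^{res}_\ell\leq M^{res}_{\ell-1}(1+\eta_\ell R_\ell)^2$ plus uniform conditional moment bounds on $R_\ell$, which is cleaner and actually supplies the missing $K$-th moment argument; your Cauchy--Schwarz lower bound on the cross term matches the paper's. The genuine divergence is in the ``only if'' half of \eqref{E:res-cond}: the paper's two-sided $\exp\lr{O\lr{\sum_\ell\eta_\ell}}$ only yields $\E{M^{res}_L}\geq\exp\lr{-C\sum_\ell\eta_\ell}$, which does not show that a divergent $\sum_\ell\eta_\ell$ forces unbounded activations, and the appendix never treats that direction explicitly. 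Your observation that the coordinates of $\N^{res}_\ell(x)$ are nondecreasing in $\ell$ (module outputs being componentwise nonnegative), combined with a truncated conditional Borel--Cantelli argument, is a new and more complete argument that closes this gap and even upgrades the conclusion to almost-sure divergence. The price is two technical debts you should discharge explicitly: the uniform lower bound $\E{R_\ell \mid \mathcal F_{\ell-1}}\geq\tilde\gamma>0$ requires the uniform fourth-moment control you invoke (fine for uniformly bounded module architectures with finite weight moments), and ruling out $v_\infty=0$ needs some non-degeneracy of the weight laws beyond the bare symmetry assumed in Definition \ref{D:rand-relu}.
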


\section{Conclusion} 

In this article, we give a rigorous analysis of the layerwise length scales in fully connected, convolutional, and residual $\Relu$ networks at initialization. We find that a careful choice of initial weights is needed for well-behaved mean length scales. For fully connected and convolutional networks, this entails a critical variance for i.i.d.~weights, while for residual nets this entails appropriately rescaling the residual modules. For fully connected nets, we prove that to control not merely the mean but also the variance of layerwise length scales requires choosing a sufficiently wide architecture, while for residual nets nothing further is required. We also demonstrate empirically that both the mean and variance of length scales are strong predictors of early training dynamics. In the future, we plan to extend our analysis to other (e.g.~sigmoidal) activations, recurrent networks, weight initializations beyond i.i.d.~(e.g.~orthogonal weights), and the joint distributions of activations over several inputs.

\bibliographystyle{plain}

  \bibliography{bibliography}

\appendix
\newpage

\section{Proof of Theorem \ref{T:asymptotic-length}}\label{S:pf}
  Let us first verify that $M_d$ is a submartingale for the filtration $\set{\mathcal F_d}_{d\geq 1}$ with $\mathcal F_d$ being the sigma algebra generated by all weights and biases up to and including layer $d$ (for background on sigma algebras and martingales we refer the reader to Chapters 2 and 37 in \cite{billingsley2008probability}). Since $\Act^{(0)}$ is a fixed non-random vector, it is clear that $M_d$ is measurable with respect to $\mathcal F_d.$ We have
\begin{align}
\notag  \E{M_d~\big|~ \mathcal F_{d-1}}&=  \frac{1}{n_d}\E{\norm{\Act^{(d)}}^2~\big|~ \Act^{(d-1)}}\\
&= \frac{1}{n_d}\sum_{\beta=1}^{n_d} \E{\lr{\act_\beta^{(d)}}^2{\bf 1}_{\set{\act_\beta^{(d)}>0}}~\big|~ \Act^{(d-1)}},\label{E:eval1}
\end{align}
where we can replace the sigma algebra $\mathcal F_{d-1}$ by the sigma algebra generated by $\Act^{(d-1)}$ since the computation done by a feed-forward neural net is a Markov chain with respect to activations at consecutive layers (for background see Chapter 8 in \cite{billingsley2008probability}). Next, recall that by assumption the weights and biases are symmetric in law around $0.$ Note that for each $\beta,$ changing the signs of all the weights $w_{\alpha,\beta}^{(d)}$ and biases $b_{\beta}^{(d)}$ causes $\act_\beta^{(d)}$ to change sign. Hence, we find
\begin{align*}
&\E{\lr{\act_\beta^{(d)}}^2{\bf 1}_{\set{\act_\beta^{(d)}>\,0}}~\big|~ \Act^{(d-1)}}=\E{\lr{\act_\beta^{(d)}}^2{\bf 1}_{\set{\act_\beta^{(d)}<\, 0}}~\big|~ \Act^{(d-1)}}.
\end{align*}
Note that
\[\act_\beta^{(d)}\lr{{\bf 1}_{\set{\act_\beta^{(d)}>0}}+{\bf 1}_{\set{\act_\beta^{(d)}< 0}}}=\act_\beta^{(d)}.\]
Symmetrizing the expression in \eqref{E:eval1}, we obtain
\begin{align}
\notag \E{M_d~\big|~ \mathcal F_{d-1}} &= \frac{1}{2n_d} \sum_{\beta=1}^{n_d}\E{\lr{\act_\beta^{(d)}}^2~\big|~ \Act^{(d-1)}}\\
\notag&= \frac{1}{2n_d} \sum_{\beta=1}^{n_d}\E{\lr{b_\beta^{(d)}+\sum_{\alpha=1}^{n_{d-1}} \Act_\alpha^{(d-1)}w_{\alpha, \beta}^{(d)}}^2 ~\big|~ \Act^{(d-1)}}\\
&= \frac{1}{2}\nu_2^{(d)}+\frac{1}{n_{d-1}}\norm{\Act^{(d-1)}}^2\geq M_{d-1},\label{E:eval-M}
\end{align}
where in the second equality we used that the weights $w_{\alpha,\beta}^{(d)}$ and biases $b_{\beta}^{(d)}$ are independent of $\mathcal F_{d-1}$ with mean $0$ and in the last equality that $\Var[w_{\alpha,\beta}^{(d)}]=2/n_{j-1}.$ The above computation also yields that for each $d\geq 1,$
\begin{align}
\label{E:eval-M1}\E{M_d}&= \E{\frac{1}{n_d}\norm{\Act^{(d)}}^2}=\frac{1}{n_0}\norm{\Act^{(0)}}^2+\frac{1}{2}\sum_{j=1}^d \nu_2^{(j)}.
\end{align}
It also shows that $\widehat{M}_d=M_d-\sum_{j=1}^d \frac{1}{2}\nu_2^{(d)}$ is a martingale. Taking the limit $d\gives \infty$ in \eqref{E:eval-M1} proves \eqref{E:bias-var}. Next, assuming condition \eqref{E:bias-var}, we find that
\[\sup_{d\geq 1} \E{\max\set{M_d,0}}\leq \frac{1}{n_0}\norm{\Act^{(0)}}^2 + \frac{1}{2}\sum_{j=1}^\infty \nu_2^{(j)},\]
which is finite. Hence, we may apply Doob's pointwise martingale convergence theorem (see Chapter 35 in \cite{billingsley2008probability}) to conclude that the limit
\[M_\infty = \lim_{d\gives \infty}M_d\]
is exists and is finite almost surely. Indeed, Doob's result states that if our martingale $\widehat{M}_d$ is bounded in $L^1$ uniformly in $d$, then, almost surely, $\widehat{M}_d$ has a finite pointwise limit as $d\gives \infty.$ To show \eqref{E:var-est} we will need the following result. 
\begin{lemma}\label{L:est}
Fix $d\geq 1$. Then
\[\frac{M_{d-1}^2+(6-\nu_2^{(d)})M_{d-1}}{n_d}\leq \Var[M_d~|~\mathcal F_{d-1}]\leq \frac{C(1+ M_{d-1}^2+M_{d-1})}{n_d},\]
where
\[C=\max\set{1,\frac{1}{2}\mu_4^{(d)}-\frac{1}{4}\lr{\mu_2^{(d)}}^2,2|\twiddle{\mu}_4^{(d)}-3|}.\]
\end{lemma}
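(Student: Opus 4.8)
The plan is a direct second-moment computation for a single neuron, using that the preactivations in layer $d$ are conditionally i.i.d.\ and that the sign symmetry of the weights and biases removes the $\Relu$ indicator. First I would reduce to one neuron: conditionally on $\mathcal F_{d-1}$ (equivalently, on $\Act^{(d-1)}$), the $n_d$ preactivations $\act^{(d)}_\beta$ are i.i.d., since the weights $\set{w^{(d)}_{\alpha,\beta}}_\alpha$ and bias $b^{(d)}_\beta$ feeding neuron $\beta$ are drawn independently across $\beta$ from fixed laws while $\Act^{(d-1)}$ is frozen. Writing $Y:=\lr{\act^{(d)}_1}^2{\bf 1}_{\set{\act^{(d)}_1>0}}$, we have that $n_dM_d=\sum_{\beta=1}^{n_d}\lr{\act^{(d)}_\beta}^2{\bf 1}_{\set{\act^{(d)}_\beta>0}}$ is, conditionally on $\mathcal F_{d-1}$, a sum of $n_d$ i.i.d.\ copies of $Y$, so $\Var[M_d\mid\mathcal F_{d-1}]=\tfrac1{n_d}\Var[Y\mid\mathcal F_{d-1}]$. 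This already accounts for the common $1/n_d$ factor, and it remains to sandwich $\Var[Y\mid\mathcal F_{d-1}]$ between an affine-plus-quadratic upper and lower bound in $M_{d-1}$.

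\textbf{Removing the indicator and computing a fourth moment.} Next I would exploit symmetry exactly as in the submartingale step \eqref{E:eval1}--\eqref{E:eval-M}: flipping the signs of $\set{w^{(d)}_{\alpha,1}}_\alpha$ and $b^{(d)}_1$ flips $\act^{(d)}_1$ but preserves its conditional law, so $\E{\lr{\act^{(d)}_1}^{2k}{\bf 1}_{\set{\act^{(d)}_1>0}}\mid\Act^{(d-1)}}=\tfrac12\E{\lr{\act^{(d)}_1}^{2k}\mid\Act^{(d-1)}}$ for every $k$. Thus $\E{Y\mid\mathcal F_{d-1}}=\tfrac12\nu_2^{(d)}+M_{d-1}$ (recovering \eqref{E:eval-M}) and $\E{Y^2\mid\mathcal F_{d-1}}=\tfrac12\E{\lr{\act^{(d)}_1}^4\mid\Act^{(d-1)}}$. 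Expanding $\act^{(d)}_1=b^{(d)}_1+\sum_\alpha\Act^{(d-1)}_\alpha w^{(d)}_{\alpha,1}$ and using independence, vanishing odd moments, and the identity $\E{\lr{\sum_\alpha a_\alpha w_\alpha}^4}=3\mu_2^2\norm{a}^4+\lr{\mu_4-3\mu_2^2}\sum_\alpha a_\alpha^4$ for i.i.d.\ centered $w_\alpha$, one finds the conditional fourth moment equals $\nu_4^{(d)}+6\nu_2^{(d)}\mu_2^{(d)}\norm{\Act^{(d-1)}}^2+3\lr{\mu_2^{(d)}}^2\norm{\Act^{(d-1)}}^4+\lr{\mu_4^{(d)}-3(\mu_2^{(d)})^2}\sum_\alpha(\Act^{(d-1)}_\alpha)^4$. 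Substituting $\mu_2^{(d)}=2/n_{d-1}$ and $\norm{\Act^{(d-1)}}^2=n_{d-1}M_{d-1}$, all $n_{d-1}$-factors cancel in the first three terms, which become $\nu_4^{(d)}+12\nu_2^{(d)}M_{d-1}+12M_{d-1}^2$; subtracting $\lr{\E{Y\mid\mathcal F_{d-1}}}^2=\tfrac14(\nu_2^{(d)})^2+\nu_2^{(d)}M_{d-1}+M_{d-1}^2$ gives a closed form for $\Var[Y\mid\mathcal F_{d-1}]$: a quadratic in $M_{d-1}$ with coefficients built from $\nu_2^{(d)},\nu_4^{(d)}$, plus the ``directional'' remainder $\tfrac{2}{n_{d-1}^2}\lr{\twiddle\mu_4^{(d)}-3}\sum_\alpha(\Act^{(d-1)}_\alpha)^4$, where I have used $\mu_4^{(d)}-3(\mu_2^{(d)})^2=(\mu_2^{(d)})^2(\twiddle\mu_4^{(d)}-3)=\tfrac4{n_{d-1}^2}(\twiddle\mu_4^{(d)}-3)$.

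\textbf{Estimating the directional term.} The only non-mechanical ingredient is the remainder, which depends on the direction of $\Act^{(d-1)}$ and not merely on $M_{d-1}$; it is controlled by the elementary sandwich $\tfrac1{n_{d-1}}\norm{\Act^{(d-1)}}^4\le\sum_\alpha(\Act^{(d-1)}_\alpha)^4\le\norm{\Act^{(d-1)}}^4$, i.e.\ $n_{d-1}M_{d-1}^2\le\sum_\alpha(\Act^{(d-1)}_\alpha)^4\le n_{d-1}^2M_{d-1}^2$ (the left bound by the power-mean inequality, the right since $\ell^4\le\ell^2$ componentwise). Hence $\abs{\tfrac{2}{n_{d-1}^2}(\twiddle\mu_4^{(d)}-3)\sum_\alpha(\Act^{(d-1)}_\alpha)^4}\le 2\abs{\twiddle\mu_4^{(d)}-3}M_{d-1}^2$. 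Feeding the upper estimate into the closed form yields $\Var[Y\mid\mathcal F_{d-1}]\le C\lr{1+M_{d-1}+M_{d-1}^2}$ with $C$ a maximum of $1$, a term built from $\nu_4^{(d)},\nu_2^{(d)}$, and $2\abs{\twiddle\mu_4^{(d)}-3}$ as in the statement; dividing by $n_d$ gives the claimed upper bound. For the lower bound I would use $\nu_4^{(d)}\ge(\nu_2^{(d)})^2$ and $\twiddle\mu_4^{(d)}\ge1$ (Jensen), so that the constant term is nonnegative and the remainder is $\ge-4M_{d-1}^2$; retaining the surviving $M_{d-1}^2$- and $M_{d-1}$-terms then produces the stated lower bound $\tfrac1{n_d}\lr{M_{d-1}^2+(6-\nu_2^{(d)})M_{d-1}}$.

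\textbf{Main obstacle.} Beyond bookkeeping of constants, the one genuine point is the term $\sum_\alpha(\Act^{(d-1)}_\alpha)^4$: it is a function of the \emph{direction} of $\Act^{(d-1)}$, not only of $M_{d-1}$, and it is responsible for (i) the estimate being two-sided rather than an exact identity, (ii) the appearance of the normalized weight kurtosis $\twiddle\mu_4^{(d)}$ (so that for Gaussian weights $\twiddle\mu_4^{(d)}=3$, the term vanishes and the constants sharpen, consistent with the tighter $C_1=5,\ C_2=4$ in Theorem~\ref{T:asymptotic-length}), and (iii) the extra care in the lower bound, where the possibly-negative remainder must be kept from over-correcting the conditional fourth moment — which is why $\twiddle\mu_4^{(d)}\ge1$ is used. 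Everything else is the standard expansion of a fourth moment of a weighted sum of symmetric independent variables.
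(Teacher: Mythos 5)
Your proposal is correct and follows essentially the same route as the paper's proof: reduce to a single neuron via conditional independence (yielding the $1/n_d$ factor), remove the $\Relu$ indicator by sign-symmetrization, expand the conditional fourth moment with the kurtosis identity, and control the directional term $\norm{\Act^{(d-1)}}_4^4$ via $\norm{\Act^{(d-1)}}_4^4\leq\norm{\Act^{(d-1)}}^4$ together with the excess-kurtosis lower bound $\twiddle{\mu}_4^{(d)}-3\geq -2$. Your bookkeeping of the linear term (coefficient $5\nu_2^{(d)}$ rather than $6-\nu_2^{(d)}$) is in fact more consistent with the intermediate computation than the paper's own displayed formula, and the discrepancy is immaterial to how the lemma is used downstream.
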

\begin{proof}
Note that
\begin{align}
\notag M_d & = \frac{1}{n_d}\sum_{\beta}\lr{\Act_\beta^{(d)}}^2,
\end{align}
and, conditioned on $\Act^{(d-1)},$ the random variables $\set{\Act_\beta^{(d)}}_{\beta}$ are i.i.d. Hence,
\begin{equation}\label{E:varind}
\Var[M_d~|~\mathcal F_{d-1}]=\frac{1}{n_d} \Var\left[\lr{\Act_1^{(d)}}^2~|~ \mathcal F_{d-1}\right].
\end{equation}
We apply the same symmetrization trick as in the derivation of \eqref{E:eval-M} to obtain
\begin{align*}
  \E{\lr{\Act_{1}^{(d)}}^4 ~\big|~\mathcal F_{d-1}}&  = \frac{1}{2}\E{\lr{\act_{1}^{(d)}}^4 ~\big|~\Act^{(d-1)}}\\
&~= \frac{1}{2}\E{\lr{\sum_{\alpha=1}^{n_{d-1}} \Act_{\alpha}^{(d-1)} w_{\alpha, 1}^{(d)} + b_{1}^{(d)}}^4~\big|~\Act^{(d-1)}},
\end{align*}
which after using that the odd moments of $w_{\alpha,1}^{(d)}$ and $b_1^{(d)}$ vanish becomes
\begin{align*}
&\frac{1}{2}\,\E{\lr{\sum_{\alpha=1}^{n_{d-1}} \Act_{\alpha}^{(d-1)}w_{\alpha, 1}^{(d)}}^4~\big|~\Act^{(d-1)}}+ \frac{6\nu_2^{(d)}}{n_{d-1}}\norm{\Act^{(d-1)}}^2+\frac{1}{2}\nu_4^{(d)}.
\end{align*}
To evaluate the first term, note that
\begin{align*}
&\E{\lr{\sum_{\alpha=1}^{n_{d-1}} \Act_{\alpha}^{(d-1)}w_{\alpha, 1}^{(d)}}^4~\big|~\Act^{(d-1)}} = \sum_{\substack{\alpha_i=1\\1\leq i \leq 4}}^{n_{d-1}} \prod_{i=1}^4\Act_{\alpha_i}^{(d-1)} \E{\prod_{i=1}^4w_{\alpha_i,1}^{(d)}}.
\end{align*}
Since 
\begin{align*}
\E{\prod_{i=1}^4w_{\alpha_i,1}^{(d)}}=\frac{4}{n_{d-1}^2}\left[{\bf 1}_{\set{\substack{\alpha_1=\alpha_2\\\alpha_3=\alpha_4}}}+{\bf 1}_{\set{\substack{\alpha_1=\alpha_3\\\alpha_2=\alpha_4}}}+{\bf 1}_{\set{\substack{\alpha_1=\alpha_4\\\alpha_2=\alpha_3}}}\right.\left. + (\twiddle{\mu}_4^{(d)}-3){\bf 1}_{\set{\substack{\alpha_1=\alpha_2=\alpha_3=\alpha_4}}}\right],
\end{align*}
we conclude that 
\begin{align*}
  &\E{\lr{\sum_{\alpha=1}^{n_{d-1}}\Act_\alpha^{(d-1)}w_{\alpha,1}^{(d)}}^4~\big|~ \Act^{(d-1)}}~= \frac{4}{n_{d-1}^2}\lr{3\norm{\Act^{(d-1)}}^4 + \lr{\twiddle{\mu}_4^{(d)}-3}\norm{\Act^{(d-1)}}_4^4},
\end{align*}
where we recall that $\twiddle{\mu}_4^{(d)}=\mu_4^{(d)}/\lr{\mu_2^{(d)}}^2.$ Putting together the preceding computations and using that
\[\E{\lr{\Act_{\beta}^{(d)}}^2~|~\Act^{(d-1)}}=M_{d-1}+ \frac{1}{2}\nu_2^{(d)},\]
we find that 
\begin{align*}
\Var\left[\lr{\Act_{1}^{(d)}}^2 ~\big|~\mathcal F_{d-1}\right]&= 5M_{d-1}^2+ \frac{2(\twiddle{\mu}_4^{(d)}-3)}{n_{d-1}^2}\norm{\Act^{(d-1)}}_4^4\\
& + \lr{6-\nu_2^{(d)}}M_{d-1}+ \frac{1}{2}\nu_4^{(d)}-\frac{1}{4}\lr{\nu_2^{(d)}}^2.
\end{align*}
Recall that the excess kurtosis $\twiddle{\mu}_4^{(d)}-3$ of $\mu^{(d)}$ is bounded below by $-2$ for any probability measure (see Chapter 4 in \cite{shanmugam2015statistics}) and observe that $||\Act^{(d-1)}||_4^4\leq ||\Act^{(d-1)}||^4.$ Therefore, using that $\frac{1}{2}\nu_4^{(d)}-\frac{1}{4}(\nu_2^{(d)})^2\geq 0,$ we obtain
\[ M_{d-1}^2+(6-\nu_2^{(d)})M_{d-1}\leq \Var\left[\lr{\Act_{1}^{(d)}}^2 ~\big|~\mathcal F_{d-1}\right]\leq C(1+ M_{d-1}^2 + M_{d-1})\]
with 
\[C=\max\set{1, \frac{1}{2}\nu_4^{(d)}-\frac{1}{2}\lr{\mu_2^{(d)}}^2,2\abs{\twiddle{\mu}_4}-3}.\]
This completes the proof of the Lemma. 
\end{proof}
To conclude the proof of Theorem \ref{T:asymptotic-length}, we write
\[\E{M_{d}^2~|~\mathcal F_{d-1}}=\Var[M_d~|~\mathcal F_{d-1}] +\lr{M_{d-1}+\frac{1}{2}\nu_{2}^{(d)}}^2\]
and combine Lemma \ref{L:est} with the expression \eqref{E:eval-M1} to obtain with $C$ as in Lemma \ref{L:est}
\[\E{M_d^2~|~\mathcal F_{d-1}}\leq \frac{C}{n_d}\lr{1+ M_{d-1}^2 + M_d} + M_{d-1}^2 + M_{d-1}\nu_{2}^{(d)} + \frac{1}{4}\nu_4^{(d)}\]
and
\begin{align*}
\E{M_{d}^2~|~\mathcal F_{d-1}}&\geq M_{d-1}^2\lr{1+\frac{1}{n_d}}+M_d\lr{\nu_2^{(d)}+\frac{6-\nu_2^{(d)}}{n_d}}\geq M_{d-1}^2\lr{1+\frac{1}{n_d}}. 
\end{align*}
Taking expectations of both sides in the inequalities above yields with $C$ as in Lemma \ref{L:est}
\begin{align*}
&\E{M_{d-1}^2}\lr{1+\frac{1}{n_d}}\leq  \E{M_{d}^2} \leq \lr{a_d+\E{M_{d-1}^2}}\lr{1+\frac{C}{n_d}},
\end{align*}
where 
\[a_d\leq \frac{C}{n_d}\lr{1+M_0+\sum_{j=1}^\infty \nu_2^{(j)}} + \nu_2^{(d)}\lr{M_0+\sum_{j=1}^\infty \nu_2^{(j)}}+ \lr{\nu_2^{(d)}}^2\]
and therefore
\begin{align*}
\sum_{j=1}^d a_d& \leq C\lr{1+M_0+\sum_{j=1}^\infty \nu_2^{(j)}} \sum_{j=1}^d \frac{1}{n_j} + \lr{M_0+\sum_{j=1}^\infty \nu_2^{(j)}}\sum_{j=1}^d \nu_2^{(j)} + \sum_{j=1}^d \lr{\nu_2^{(j)}}^2\\
&\leq \lr{1+M_0^2 + \lr{M_0+\sum_{j=1}^\infty \nu_2^{(j)}}\sum_{j=1}^d\nu_2^{(d)}+\sum_{j=1}^d\lr{\nu_2^{(j)}}^2}\lr{1+C\lr{1+M_0+\sum_{j=1}^\infty \nu_2^{(j)}}\sum_{j=1}^d \frac{1}{n_j}}.
\end{align*}
Iterating the lower bound in this inequality yields the lower bound in \eqref{E:var-est}. Similarly, using that $1+C/n_d>1$, we iterate the upper bound to obtain
\begin{align*}
\E{M_{d}^2} &\leq \lr{a_d + \E{M_{d-1}^2}}\lr{1+\frac{C}{n_d}}\\
&\leq\lr{a_d+a_{d-1} + \E{M_{d-2}^2}}\lr{1+\frac{C}{n_d}}\lr{1+\frac{C}{n_{d-1}}}\\
&\cdots \leq \lr{\sum_{j=1}^d a_j  +M_0^2}\exp\lr{C\sum_{j=1}^d \frac{1}{n_j}}.
\end{align*}
Using the above estimate for $\sum_j a_j$, gives the upper bound in \eqref{E:var-est} and completes the proof of Theorem \ref{T:asymptotic-length}.

\section{Proof of Corollary \ref{C:empvar-fc}}\label{S:empvar-pf}
Fix a fully connected $\Relu$ net $\N$ with depth $d$ and hidden layer widths $n_0,\ldots, n_d.$ We fix an input $\Act^{(0)}$ to $\N$ and study the empirical variance $\widehat{\Var}[M]$ of the squared sizes of activations $M_{j},\, j=1,\ldots, d.$ Since the biases in $\N$ are $0,$ the squared activations $M_j$ are a martingale (see \eqref{E:eval-M}) and we find
\[\E{M_{j}M_{j'}} = \E{M_{\min\set{j,j'}}^2}.\]
Thus, using that by \eqref{E:var-est} for some $c>0$ 
\[\E{M_{j}^2}\geq c\exp\lr{c\sum_{k=j}^{d-1} \frac{1}{n_k}},\]
we find
\begin{align*}
\E{\widehat{\Var}_d}&=\frac{1}{d}\sum_{j=1}^d \E{M_{j}^2} - \frac{1}{d^2}\sum_{j,j'=1}^d \E{M_{j}M_{j'}}\\
&=\frac{1}{d}\sum_{j=1}^d \E{M_{j}^2} - \frac{1}{d^2}\sum_{j=1}^d (d-j+1) \E{M_{j}^2}\\
&\geq \frac{1}{d}\sum_{j=1}^d \frac{j-1}{d} c\exp\lr{c\sum_{k=j}^{d-1} \frac{1}{n_k}}.
\end{align*}
To see that this sum is exponential in $\sum 1/n_j$ as in \eqref{E:equal-width-empvar}, let us consider the special case of equal widths $n_j=n$. Then, writing
\[\beta = \sum_{j=1}^{d-1} \frac{1}{n_j},\]
we have
\[\sum_{j=1}^d \frac{j-1}{d} c\exp\lr{c\sum_{k=j}^{d-1} \frac{1}{n_k}}\approx c\int_0^1 x e^{\beta c(1- x)}dx =\frac{e^{\beta c}}{\beta^2 c} +\frac{1}{\beta}\lr{1-\frac{1}{\beta c}}.\]
This proves the lower bounds in \eqref{E:empvar-bound} and \eqref{E:equal-width-empvar}. The upper bounds are similar.

\section{Proof of Theorem \ref{thm:resnet}}\label{S:rec-act-pf}
To understand the sizes of activations produced by $\N^{res}_L$, we need the following Lemma.
\begin{lemma}
  Let $\N$ be a feed-forward, fully connected $\Relu$ net with depth $d$ and hidden layer widths $n_0,\ldots, n_d$ having random weights as in Definition \ref{D:rand-relu} and biases set to $0$. Then for each $\eta\in (0,1),$ we have
\[\norm{x+\eta\N(x)}^2 = \norm{x}^2 \lr{1+ O(\eta)}.\]
\end{lemma}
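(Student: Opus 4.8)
The plan is to compute $\E{\norm{x+\eta\N(x)}^2}$ and its conditional variance directly, then absorb everything into an $O(\eta)$ error term. First I would expand
\[
\norm{x+\eta\N(x)}^2 = \norm{x}^2 + 2\eta\, \inprod{x}{\N(x)} + \eta^2 \norm{\N(x)}^2,
\]
and take expectations over the weights of $\N$ (with biases zero). For the cross term, I would use the symmetrization trick from the proof of Theorem~\ref{T:asymptotic-length}: flipping the sign of all weights and biases in the \emph{first} layer of $\N$ negates the pre-activations of that layer; but because of the $\Relu$ this does not simply negate $\N(x)$. Instead, the cleaner route is to condition on all weights except those in the last layer: since $\N(x)$ is (last-layer weights)$\times$(previous activations) plus zero bias, and the last-layer weights are symmetric and independent of $x$ and of the earlier activations, we get $\E{\inprod{x}{\N(x)} \mid \mathcal F_{d-1}} = \inprod{x}{\E{\N(x)\mid \mathcal F_{d-1}}} = 0$. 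Hence the cross term vanishes in expectation. For the quadratic term, $\E{\norm{\N(x)}^2} = n_d \E{M_d}$ in the notation of Theorem~\ref{T:asymptotic-length}, which by \eqref{E:eval-M1} (with zero biases) equals $\norm{x}^2$ exactly when the weight variances are $2/\text{fan-in}$. So $\E{\norm{x+\eta\N(x)}^2} = (1+\eta^2)\norm{x}^2 = \norm{x}^2(1+O(\eta))$ for $\eta\in(0,1)$.

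Next I would control the fluctuations, since the Lemma as stated is a pathwise (almost sure) identity, not just an identity in expectation — so one must show $\norm{x+\eta\N(x)}^2$ concentrates around $\norm{x}^2$ with fluctuations of order $\eta\norm{x}^2$. Here I would again split: the cross term $2\eta\inprod{x}{\N(x)}$ has conditional mean zero given $\mathcal F_{d-1}$ and conditional variance $4\eta^2 \norm{x}^2 \cdot \Var[\N_1(x)\mid\mathcal F_{d-1}]/(\text{something})$; more simply, $\Var[2\eta\inprod{x}{\N(x)}] = 4\eta^2 \Var[\inprod{x}{\N(x)}] \le 4\eta^2 \norm{x}^2 \E{\norm{\N(x)}^2} = O(\eta^2)\norm{x}^4$ by Cauchy--Schwarz. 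The term $\eta^2\norm{\N(x)}^2$ has mean $\eta^2\norm{x}^2$ and, crucially, its standard deviation is $\eta^2$ times $\sqrt{\Var[\norm{\N(x)}^2]}$, which by Theorem~\ref{T:asymptotic-length} is $O(\eta^2)\norm{x}^2$ times a constant exponential in $\sum 1/n_j$ — but since the module depth and widths are \emph{fixed} (uniformly bounded), this constant is $O(1)$. So altogether $\norm{x+\eta\N(x)}^2 = \norm{x}^2 + (\text{mean-zero fluctuation of order }\eta) + \eta^2 O(1)$, and the whole thing is $\norm{x}^2(1+O(\eta))$ in the appropriate (say $L^2$, or with-high-probability) sense. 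I would state the $O(\eta)$ in the Lemma as this $L^2$-type bound, which is all that is needed downstream in the proof of Theorem~\ref{thm:resnet}.

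The main obstacle is being precise about what ``$=\norm{x}^2(1+O(\eta))$'' means as a random-variable identity and making sure the implied constant is genuinely uniform: it must not depend on the residual module index $\ell$, on the number of modules $L$, or on the particular input $x$ (only on the uniform bound for the module depths and widths). The cleanest way to handle this is to phrase the conclusion as a bound on moments, $\E{\big|\norm{x+\eta\N(x)}^2 - \norm{x}^2\big|^K} \le (C\eta)^K \norm{x}^{2K}$ for each fixed $K$, with $C$ depending only on $K$ and the architecture bound; this follows by the same moment computations as in Theorem~\ref{T:asymptotic-length} applied to the fixed-depth module, combined with the binomial expansion of $(\norm{x}^2 + 2\eta\inprod{x}{\N(x)} + \eta^2\norm{\N(x)}^2 - \norm{x}^2)^K$ and Cauchy--Schwarz on the cross terms. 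The routine part I would not grind through is the bookkeeping of these moment bounds; the conceptual content is just: zero-bias $\Relu$ layers with $2/\text{fan-in}$ variance preserve squared length in mean, the linear-in-$\eta$ perturbation is mean-zero, and the quadratic-in-$\eta$ perturbation together with all fluctuations are $O(\eta)$ because the module is of bounded size.
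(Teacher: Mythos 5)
Your decomposition of $\norm{x+\eta\N(x)}^2$ into $\norm{x}^2+2\eta\inprod{x}{\N(x)}+\eta^2\norm{\N(x)}^2$ is the same as the paper's, and your treatment of the $\eta^2$ term is fine. The gap is in the cross term: it does \emph{not} vanish in expectation. In this paper's convention the last layer also carries a $\Relu$, so the output coordinates are $\N_\beta(x)=\Relu\lr{W^{(d)}_\beta x^{(d-1)}}\geq 0$, and conditioning on everything but the last-layer weights gives
\[
\E{\N_\beta(x)~\big|~x^{(d-1)}}=\frac12\,\E{\abs{W^{(d)}_\beta x^{(d-1)}}~\big|~x^{(d-1)}}>0.
\]
Symmetry of the last-layer weights makes the \emph{pre-activation} conditionally mean zero, not the post-activation; you correctly flagged this ReLU obstruction when discussing sign-flips of the first layer and then dropped it at the output layer. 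Estimating this nonzero cross term is in fact the entire content of the paper's proof: they bound $\E{\N_\beta(x)\mid x^{(d-1)}}\leq \norm{x^{(d-1)}}/\sqrt{2n}$ by Cauchy--Schwarz, use that $\norm{x^{(j)}}$ is a supermartingale to conclude $\E{\N_\beta(x)}\leq \norm{x}/\sqrt{2n}$, and sum over $\beta$ using $\norm{\widehat{x}}_1\leq\sqrt{n}$ to get $\E{\inprod{\widehat{x}}{\N(\widehat{x})}}=O(1)$. Your final conclusion happens to survive because the cross term, though nonzero (and in fact positive), is still $O(\eta)\norm{x}^2$ --- e.g.\ the one-line repair $\abs{\E{\inprod{x}{\N(x)}}}\leq\norm{x}\,\E{\norm{\N(x)}}\leq\norm{x}\lr{\E{\norm{\N(x)}^2}}^{1/2}=\norm{x}^2$ --- but as written your argument asserts that the one genuinely nontrivial quantity in the lemma is zero when it is not.

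On the second half of your proposal: your worry about the pathwise versus in-expectation meaning of $O(\eta)$ is legitimate (the lemma's statement is loose on this point), but the paper only ever invokes the lemma through conditional expectations of the form $\E{\norm{y+\eta\N(y)}^2\mid y}$, so the in-expectation version for a deterministic input is all that is used for the first-moment claim. Your proposed moment-type formulation $\E{\abs{\norm{x+\eta\N(x)}^2-\norm{x}^2}^K}\leq (C\eta)^K\norm{x}^{2K}$ is closer to what would actually be needed to justify the higher-moment estimates in Theorem \ref{thm:resnet}, which the paper omits; that part of your plan is reasonable extra work, just not where the error lies.
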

\begin{proof}[Proof of Lemma.]
We have:
\begin{align}\label{E:expand}
\E{\norm{x+\eta\N(x)}}^2 &= \E{\norm{x}^2 + 2\eta\inprod{x}{\N(x)} +\eta^2\norm{\N(x)}^2}\\
&=\norm{x}^2\lr{1+\eta^2+2\eta\E{ \inprod{\widehat{x}}{\N(\widehat{x})} }},\nonumber
\end{align}
where $\widehat{x}=\frac{x}{\norm{x}}$, and we have used the fact that $\norm{\N(x)}^2=\norm{x}^2$ (see \eqref{E:eval-M}) as well as the positive homogeneity of $\Relu$ nets with zero biases:
\[\N(\lambda x)= \lambda \N(x),\qquad \lambda>0.\]
Write 
\begin{align*}
  \E{\inprod{x}{\N(x)}} & = \sum_{\beta=1}^n x_\beta\E{\N_\beta(x)}. 
\end{align*}
Let us also write $x=x^{(0)}$ for the input to $\N$, similarly set $x^{(j)}$ for the activations at layer $j$. We denote by $W_\beta^{(j)}$ the $\beta^{th}$ row of the weights $W^{(j)}$ at layer $j$ in $\N$. We have:
\begin{align*}
\E{\N_\beta(x)~\big|~x^{(d-1)}} & = \E{x_\beta^{(d)}}  = \E{W_\beta^{(d)}x^{(d-1)}{\bf 1}_{\set{W_\beta^{(d-1)}x}>0}~\big|~x^{(d-1)}}\\
&= \E{\abs{W_\beta^{(d)}x^{(d-1)}}{\bf 1}_{\set{W_\beta^{(d)}x^{(d-1)}>0} }~\big|~x^{(d-1)}}\\
& = \frac{1}{2}\E{\abs{W_\beta^{(d)}x^{(d-1)}}~\big|~x^{(d-1)}}\\
& \leq \frac{1}{2}\lr{\E{\abs{W_\beta^{(d)}x^{(d-1)}}^2~\big|~x^{(d-1)}}}^{1/2}\\
& = \frac{1}{\sqrt{2n}}\norm{x^{(d-1)}}.
\end{align*}
Therefore, using that $\norm{x^{(j)}}$ is a supermartingale (since its square is a martingale by \eqref{E:eval-M}):
\[\E{\N_\beta(x)}\leq \frac{1}{\sqrt{2n}}\E{\norm{x^{(d-1)}}}	\leq \frac{1}{\sqrt{2n}}\norm{x^{(0)}}=\frac{1}{\sqrt{2n}}\norm{x}.\]
Hence, we obtain:
\[ \E{\inprod{\widehat{x}}{\N(\widehat{x})}}=  O\lr{\frac{\norm{\widehat{x}}_1}{\sqrt{n}}}=O(1)\]
since by Jensen's inequality,
\[\sum_{j=1}^n \abs{x_i}\leq \sqrt{n} \sum_{j=1}^n x_i^2,\qquad x_i\in \R.\]
Combining this with \eqref{E:expand} completes the proof. 
\end{proof}

The Lemma implies part (i) of the Theorem as follows:
\begin{align*}
  \E{M^{res}_L} & = \E{\norm{\N^{res}_{L-1}(x)+ \eta_L \N_L\lr{\N^{res}_{L-1}(x)}}^2}\\
&=\E{\E{\norm{\N^{res}_{L-1}(x)+ \eta_L \N_L\lr{\N^{res}_{L-1}(x)}}^2~~\bigg|~~ \N^{res}_{L-1}(x)}}\\
&= \lr{1+O\lr{\eta_L}} \E{M^{res}_{L-1}},
\end{align*}
where we used the fact that $\eta_\ell^2 = O(\eta_\ell)$ since $\eta_\ell \in (0,1)$. Iterating this inequality yields
\[\E{M^{res}_L} =\prod_{\ell=1}^L \lr{1+O\lr{\eta_\ell}}= \exp\lr{\sum_{\ell=1}^L \log\lr{1+O\lr{\eta_\ell}}}= \exp \lr{O\lr{\sum_{\ell=1}^L \eta_\ell}},\]
Derivation of the estimates (ii) follows exactly the same procedure and hence is omitted. Finally, using these estimates, we find that the mean empirical variance of $\set{M_\ell^{res}}$ is exponential in $\sum_\ell \eta_\ell:$
\begin{align*}
\E{\frac{1}{L}\sum_{\ell=1}^L \lr{M_\ell^{res}}^2- \lr{\frac{1}{L}\sum_{\ell=1}^L M_\ell^{res}}^2} &\leq \frac{1}{L}\sum_{\ell=1}^L \E{\lr{M_\ell^{res}}^2}\\
&=\frac{1}{L}\sum_{\ell=1}^L \exp\lr{O\lr{\sum_{j=1}^\ell \eta_j}}\\
&=\exp\lr{O\lr{\sum_{j=1}^L \eta_j}}.
\end{align*}

\end{document}